\newif\iffull
\newcommand*{\citet}[1]{\AtNextCite{\AtEachCitekey{\defcounter{maxnames}{2}}}\textcite{#1}}
\newcommand*{\citep}[1]{\cite{#1}}
\newif\ifnotes
        \newcommand{\marnote}[1]{{\sf \textcolor{blue}{#1}}}
		\newcommand{\marnote}[1]{}
\newcommand{\vnote}[1]{\marnote{Vitaly: #1}}
\newcommand{\mote}[1]{\marnote{Moritz: #1}}
\newcommand{\mnote}{\mote}
\providecommand{\acc}{\mathsf{acc}}
\providecommand{\ind}{\mathsf{Ind}}
\providecommand{\opt}{\mathsf{Opt}}
\providecommand{\cO}{{\mathcal O}}
\providecommand{\by}{{\bar y}}
\providecommand{\bz}{{\bar z}}
\providecommand{\bq}{{\bar q}}
\providecommand{\br}{{\bar r}}
\providecommand{\bp}{{\bar p}}
\providecommand{\bs}{{\bar s}}
\providecommand{\bc}{{\bar c}}
\providecommand{\bpi}{{\bar \pi}}
\providecommand{\balpha}{{\bar \alpha}}
\newtheorem{fact}[thm]{Fact}
\newcommand{\conf}{\mathsf{conf}}
\newcommand{\pl}{\mathsf{plu}}
\newcommand{\Bin}{\mathsf{Bin}}
\newcommand{\mnom}{\mathsf{Mnom}}
\newcommand{\NB}{\mathtt{NB}}
\newcommand{\Pois}{\mathsf{Pois}}
\begin{document}

\title{The advantages of multiple classes for reducing overfitting from test set reuse}
\author{
Vitaly Feldman\thanks{Google Brain. Part of this work was done while the author was visiting the Simons Institute for the Theory of Computing.}
\and Roy Frostig\thanks{Google Brain}
\and Moritz Hardt\thanks{University of California, Berkeley. Work done while at Google.}
}
\maketitle

\begin{abstract}
Excessive reuse of holdout data can lead to overfitting. However, there is little concrete evidence of significant overfitting due to holdout reuse in popular multiclass benchmarks today.
Known results show that, in the worst-case, revealing the accuracy of $k$ adaptively chosen classifiers on a data set of size $n$ allows to create a classifier with bias of $\Theta(\sqrt{k/n})$ for any binary prediction problem. We show a new upper bound of $\tilde O(\max\{\sqrt{k\log(n)/(mn)},k/n\})$ on the worst-case bias that any attack can achieve in a prediction problem with $m$ classes. Moreover, we present an efficient attack that achieve a bias of $\Omega(\sqrt{k/(m^2 n)})$ and improves on previous work for the binary setting ($m=2$). We also present an inefficient attack that achieves a bias of $\tilde\Omega(k/n)$. Complementing our theoretical work, we give new practical attacks to stress-test multiclass benchmarks by aiming to create as large a bias as possible with a given number of queries. Our experiments show that the additional uncertainty of prediction with a large number of classes indeed mitigates the effect of our best attacks.

Our work extends developments in understanding overfitting due to adaptive data analysis to multiclass prediction problems. It also bears out the surprising fact that multiclass prediction problems are significantly more robust to overfitting when reusing a test (or holdout) dataset. This offers an explanation as to why popular multiclass prediction benchmarks, such as ImageNet, may enjoy a longer lifespan than what intuition from literature on binary classification suggests.
\end{abstract}

\section{Introduction}

Several machine learning benchmarks have shown surprising longevity, such as the ILSVRC 2012 image classification benchmark based on the ImageNet database~\cite{russakovsky2015imagenet}. Even though the test set contains only $50{,}000$ data points, hundreds of results have been reported on this test set. Large-scale hyperparameter tuning and experimental trials across numerous studies likely add thousands of queries to the test data. Despite this excessive data reuse, recent replication studies~\cite{RechtRSS18, RechtRSS19} have shown that the best performing models transfer rather gracefully to a newly collected test set collected from the same source according to the same protocol.

What matters is not only the number of times that a test (or holdout) set has been accessed, but also how it is accessed. Modern machine learning practice is \emph{adaptive} in its nature. Prior information about a model's performance on the test set inevitably influences future modeling choices and hyperparameter settings. Adaptive behavior, in principle, can have a radical effect on generalization.

Standard concentration bounds teach us to expect a maximum error of $O(\sqrt{\log(k)/n})$ when estimating the means of $k$ non-adaptively chosen bounded functions on a data set of size $n.$ However, this upper bound sharply deteriorates to $O(\sqrt{k/n})$ for adaptively chosen functions, an exponential loss in~$k$. Moreover, there exists a sequence of adaptively chosen functions, what we will call an \emph{attack}, that causes an estimation error of $\Omega(\sqrt{k/n})$ ~\cite{DworkFHPRR14:arxiv}.

What this means is that in principle an analyst can overfit substantially to a test set with relatively few queries to the test set. Powerful results in \emph{adaptive data analysis} provide sophisticated holdout mechanisms that guarantee better error bounds through noise addition~\cite{DworkFHPRR15:science} and limited feedback mechanisms~\cite{BlumH15}.
However, the standard holdout method remains widely used in practice, ranging from machine learning benchmarks and data science competitions to validating scientific research and testing products during development.
If the pessimistic bound were indicative of performance in practice, the holdout method would likely be much less useful than it is.

It seems evident that there are factors that prevent this worst-case overfitting from happening in practice. In this work, we isolate the number of classes in the prediction problem as one such factor that has an important effect on the amount of overfitting we expect to see.
Indeed, we find that in the worst-case the number of queries required to achieve certain bias grows at least linearly with the number of classes, a phenomenon that we establish theoretically and substantiate experimentally.

\subsection{Our contributions}

We study in both theory and experiment the effect that multiple classes have on the amount of overfitting caused by test set reuse. In doing so, we extend important developments for binary prediction to the case of multiclass prediction.

To state our results more formally, we introduce some notation. A classifier is a mapping $f\colon X\to Y,$ where $Y=[m]=\{1,\dots,m\}$ is a discrete set consisting of $m$ classes and $X$ is the data domain. A data set of size~$n$ is a tuple $S \in (X\times Y)^n$ consisting of $n$ labeled examples $(x_i, y_i)_{i\in[n]}$, where we assume each point is drawn independently from a fixed underlying population.
In our model, we assume that a data analyst can \emph{query} the data set by specifying a classifier~$f\colon X\to Y$ and observing its accuracy $\acc_S(f)$ on the data set $S$, which is simply the fraction of points that are correctly labeled $f(x_i)=y_i.$ We denote by $\acc(f)=\Pr\{f(x)=y\}$ the accuracy of $f$ over the underlying population from which $(x,y)$ are drawn. Proceeding in $k$ rounds, the analyst is allowed to specify a function in each round and observe its accuracy on the data set. The function chosen at a round~$t$ may depend on all previously revealed information. The analyst builds up a sequence of adaptively chosen functions~$f_1,\dots,f_k$ in this manner.

We are interested in the largest value that $\acc_S(f_t)-\acc(f_t)$ can attain over all $1\le t \le k$. Our theoretical analysis focuses on the worst case setting where an analyst has no prior knowledge (or, equivalently, has a uniform prior) over the correct label of each point in the test set. In this setting,
the highest expected accuracy achievable on the unknown distribution is $1/m$. In effect, we analyze the expected advantage of the analyst over random guesses.
 
In reality, an analyst typically has substantial prior knowledge about the labels and starts out with a far stronger classifier than one that predicts at random. Using domain information, models, and training data, there are many conceivable ways to label many points with high accuracy and to pare down the set of labels for points the remaining points. Indeed, our experiments explore a couple of techniques for reducing label uncertainty given a good baseline classifier. After incorporating all prior information, there is usually still a large set of points for which there remains high uncertainty over the correct label. Effectively, to translate the theoretical bounds to a practical context, it is useful to think of the dataset size $n$ as the number of point that are hard to classify, and to think of the class count $m$ as a number of (roughly equally likely) candidate labels for those points.

Our theoretical contributions are several upper and lower bounds on the achievable bias in terms of the number of queries~$k$, the number of data points~$n$, and the number of classes~$m$. We first establish an upper bounds on the bias achievable by any attack in the uniform prior setting. 

\begin{thm}[Informal]
\label{thm:upper-bound-few-queries-intro}
There is a distribution $P$ over examples labeled by $m$ classes such that any algorithm that makes at most $k$ queries to a dataset $S\sim P^n$ must satisfy with high probability
\[
\max_{1\le t\le k}\acc_S(f_t) = \fr{m} + O \lp\max\left\{ \sqrt{\frac{k \log n}{nm}},  \frac{k \log n}{n}\right\} \rp\,.
\]
\end{thm}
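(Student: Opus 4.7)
The plan is to instantiate $P$ as the distribution whose label marginal is uniform on $[m]$ and independent of $x$. Under this choice every classifier $f\colon X\to[m]$ has population accuracy exactly $1/m$, and for each fixed $f$ the empirical accuracy $\acc_S(f)$ equals $1/n$ times a $\mathrm{Binomial}(n,1/m)$ random variable. The theorem then reduces to showing that no adaptive sequence of $k$ queries on $S\sim P^n$ can surface a classifier whose \emph{empirical} accuracy exceeds $1/m$ by more than the claimed gap.

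The key device is a transcript / decision-tree argument. Without loss of generality treat the analyst as a deterministic strategy whose next query depends only on the transcript of past (query, answer) pairs; a randomized analyst reduces to this by fixing its worst internal randomness. Because each observed answer $\acc_S(f_t)$ lies in $\{0,1/n,\ldots,1\}$, the interaction tree has branching factor at most $n+1$ and depth $k$, hence at most $(n+1)^k$ root-to-leaf paths. Once we condition on a particular path $\tau$, the entire sequence $f_1^\tau,\ldots,f_k^\tau$ is a fixed, data-independent list of classifiers, and standard concentration applies individually to each $\acc_S(f_t^\tau)$.

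For the concentration step I would apply Bernstein's inequality to $n\cdot\acc_S(f)\sim\mathrm{Binomial}(n,1/m)$, giving a tail bound of the form $\exp\!\bigl(-c\,n\delta^2/(1/m+\delta)\bigr)$ for deviation $\delta$. A union bound over all $k\cdot(n+1)^k$ pairs (position, path) forces $n\delta^2/(1/m+\delta)\gtrsim k\log n$; solving in the two regimes gives $\delta = O\bigl(\sqrt{k\log n/(nm)}\bigr)$ when $\delta\lesssim 1/m$ (sub-Gaussian regime) and $\delta = O(k\log n/n)$ when $\delta$ is larger (sub-Poisson regime). Taking the maximum of the two yields exactly the stated bound. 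Although only one root-to-leaf path is actually realized by the data, the union bound is still a valid upper bound on the probability that the realized path contains any function with empirical accuracy above $1/m+\delta$.

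The hard part, I expect, is controlling the transcript cardinality by $(n+1)^k$ and not more: a naive union bound over the class of all classifiers $f\colon X\to[m]$ is hopelessly loose, so the argument crucially exploits that the analyst ``sees'' each query only through its scalar answer in a set of size $n+1$. Matching the two Bernstein regimes and checking that a factor of $\log k$ can be absorbed into $\log n$ is then routine, but essentially all of the leverage in the proof comes from this discretization of the analyst's view.
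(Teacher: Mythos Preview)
Your proposal is correct and follows essentially the same route as the paper: bound the number of possible answer transcripts by $(n+1)^k$, observe that for each fixed transcript the relevant classifiers are data-independent so Chernoff/Bernstein applies, and union-bound. The paper phrases it as fixing the response vector $\alpha\in\{0,1/n,\ldots,1\}^k$ and bounding the output of $\A^\alpha$, while you phrase it as root-to-leaf paths of a decision tree and bound all $k$ queries along each path; the extra factor of $k$ is absorbed into $\log n$, and the two-regime analysis (sub-Gaussian vs.\ sub-Poisson, or equivalently the two branches of the multiplicative Chernoff bound) is identical.
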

This bound has two regimes that emerge from the concentration properties of the binomial distribution. The more important regime for our discussion is when $k=\tilde O(n/m)$ for which the bound is $\tilde O(\sqrt{k/(nm)})$. In other words, achieving the same bias requires $O(m)$ more queries than in the binary case. What is perhaps surprising in this bound is that the difficulty of overfitting is not simply due to an increase in the amount of information per label. The label set~$\{1, \dots, m\}$ can be indexed with only $\log(m)$ bits of information.

We remark that these bounds hold even if the algorithm has access to the data points without the corresponding labels. The proofs follow from information-theoretic compression arguments and can be easily extended to any algorithm for which one can bound the amount of information extracted by the queries (e.g.~via the approach in \citep{DworkFHPRR15:arxiv}).

Complementing this upper bound, we describe two attack algorithms that establish lower bounds on the bias in the two parameter regimes.

\begin{thm}[Point-wise attack, informal]
\label{thm:nb-attack-intro}
For sufficiently large $n$ and $ n\geq k \geq k_{\min} = O(m \log m)$ there is an attack that uses $k$ queries and on any dataset $S$ outputs $f$ such that
\[
\acc_S(f) = \fr{m} + \Omega\lp \sqrt{\frac{k}{nm^2}}\rp \,.
\]
\end{thm}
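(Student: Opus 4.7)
In round $t \in [k]$, query the classifier $f_t$ that labels each $x_i$ with an independent uniform element of $[m]$, observe $a_t = \acc_S(f_t)$, and at the end output the point-wise classifier $\hat f(x_i) = \arg\max_{y \in [m]} c_{i,y}$, where
\[
c_{i,y} = \sum_{t=1}^k \left( \ind[f_t(x_i)=y] - \tfrac{1}{m} \right) \left( a_t - \tfrac{1}{m} \right).
\]
Intuitively $a_t - 1/m$ aggregates the correctness bits of $f_t$ across all $n$ points, and $c_{i,y}$ correlates this scalar signal with the event that $f_t$ happened to guess $y$ at $x_i$.

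\textbf{First and second moments.} Expanding $a_t - 1/m = n^{-1}\sum_j (\ind[f_t(x_j)=y_j]-1/m)$, each $j \ne i$ contribution to $E[c_{i,y}]$ vanishes by independence of $f_t(x_i)$ and $f_t(x_j)$. The $j = i$ contribution is $(m-1)/m^2$ when $y = y_i$ and $-1/m^2$ otherwise. Summing over $t$ gives, for every $y \ne y_i$,
\[
E[c_{i,y_i}] - E[c_{i,y}] = \frac{k}{nm}.
\]
A parallel per-round variance computation is dominated by the $n-1$ cross terms with $j \ne i$, each of magnitude $\Theta(1/m^2)$; summing across independent rounds yields $\mathrm{Var}(c_{i,y}) = \Theta(k/(nm^2))$.

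\textbf{From scores to accuracy.} A Berry-Esseen bound applied to the sum over $t$ makes $(c_{i,y})_{y \in [m]}$ approximately Gaussian, with rank-$(m-1)$ covariance induced by the simplex constraint $\sum_y \ind[f_t(x_i)=y]=1$. The signal-to-noise ratio is $\epsilon = (k/(nm))/\sqrt{k/(nm^2)} = \sqrt{k/n}$, so the event $\hat f(x_i) = y_i$ is approximately $\{Z_0 + \epsilon > \max_{j=1}^{m-1} Z_j\}$ for iid standard normals. A first-order expansion gives
\[
\Pr[\hat f(x_i) = y_i] \ge \tfrac{1}{m} + \epsilon \cdot E\!\left[\phi\!\left(\max_{j=1}^{m-1} Z_j\right)\right] - O(\epsilon^2),
\]
and since $\max_{j\le m-1} Z_j$ concentrates near $\sqrt{2\log m}$ where $\phi$ has value $\Theta(1/m)$, the expectation is $\Theta(1/m)$. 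Thus each point's success probability exceeds $1/m$ by $\Omega(\sqrt{k/n}/m) = \Omega(\sqrt{k/(nm^2)})$. Linearity of expectation gives $E[\acc_S(\hat f)] \ge 1/m + \Omega(\sqrt{k/(nm^2)})$, and a bounded-differences argument on the $nk$ individual coordinates $f_t(x_i)$, each of which perturbs $\acc_S(\hat f)$ by $O(1/n)$ away from a small set of near-tie points, lifts this to a high-probability statement on any fixed $S$.

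\textbf{Main obstacle.} The delicate point is handling dependencies: the $m$ scores at a single point are linearly coupled (they sum to zero), and scores at different points are coupled through the shared statistics $a_t$. Both are tamed by a leave-one-out substitution $a_t^{(-i)} = a_t - n^{-1}(\ind[f_t(x_i)=y_i]-1/m)$, which is independent of $f_t(x_i)$ and perturbs each $c_{i,y}$ by only $O(k/n^2)$, well below the target resolution. After this substitution the scores at point $i$ are a sum of $k$ independent bounded contributions and Berry-Esseen applies cleanly; the rank deficiency is absorbed by subtracting the common mean across $y$. The hypothesis $k \ge \Omega(m \log m)$ is what ensures, on the one hand, that every label is represented at every point with high probability (so that the argmax is over genuinely distinct signals), and on the other that the Berry-Esseen error is a lower-order term compared to the tail probabilities $\Theta(1/m)$ governing the max of $m-1$ normals.
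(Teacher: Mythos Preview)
Your attack and its moment calculations are sound, and the reduction to ``$Z_0+\epsilon$ beats the max of $m-1$ Gaussians'' is exactly the right heuristic picture. But the step where you invoke Berry--Esseen is a genuine gap, not just a detail to be filled in.

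After your leave-one-out, the (centered) score vector at point $i$ is a sum over $t$ of vectors $w_t(e_{I_t}-\tfrac1m\mathbf 1)$ with $I_t$ uniform on $[m]$. Each standardized summand has norm of order $\sqrt{m/k}$, so the Lyapunov/third-moment quantity is $\sum_t \mathbb E|\Sigma^{-1/2}X_t|^3 \asymp m^{3/2}/\sqrt{k}$. Any multivariate Berry--Esseen bound for convex sets (Bentkus-type) then gives an error at least of order $m^{3/2}/\sqrt{k}$ on $\Pr[\arg\max = y_i]$, and this swamps the advantage $\Theta(\sqrt{k/(nm^2)})$ throughout the stated range $k\in[\Omega(m\log m),n]$. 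Your final paragraph says $k\ge\Omega(m\log m)$ makes the CLT error small ``compared to the tail probabilities $\Theta(1/m)$,'' but that is the wrong comparison: you are trying to detect a perturbation from $1/m$ that can be arbitrarily small (down to $\Theta(1/(m\sqrt n))$ at $k=k_{\min}$), so the CLT error must beat \emph{that}, not $1/m$ itself---and in fact even the comparison to $1/m$ fails at $k\asymp m\log m$. The real obstacle is that the exact symmetry gives $\Pr[\arg\max(\text{noise})=y_i]=1/m$ for free, and what you actually need is \emph{anti-concentration}: a lower bound on $\Pr[c_{y_i}\in(M-\Delta,M]]$ where $M=\max_{y\ne y_i}c_y$. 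Since $c_{y_i}$ and $M$ are coupled through the one-hot constraint, this does not follow from any off-the-shelf CLT.

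A second, smaller issue: the leave-one-out substitution does not perturb $c_{i,y}$ by $O(k/n^2)$. The difference $c_{i,y}-c'_{i,y}=\tfrac1n\sum_t(\mathbf 1[I_t=y]-\tfrac1m)(\mathbf 1[I_t=y_i]-\tfrac1m)$ has mean exactly $\Theta(k/(nm))$---it \emph{is} the signal---and fluctuations $O(\sqrt{k}/(n\sqrt m))$. So the correct decomposition is $c_{i,y}=\mu_y+c'_{i,y}+o(\text{noise})$, not ``$c'_{i,y}$ plus something negligible.''

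The paper's proof sidesteps the CLT issue entirely. It analyzes a cruder attack: keep only queries with $\alpha_j\ge \tfrac1m+\gamma$ for $\gamma\asymp 1/\sqrt{mn}$, then take unweighted plurality over a $\mathrm{Pois}(\lambda)$-sized subsample of those. The point of Poissonization is that the per-label counts become \emph{independent} Poisson variables, so the correct-label count $c'_m$ and the max of the wrong-label counts $z$ are independent. The advantage then reduces to $\Pr[c'_m\in[z-d_m+1,z]]$ with $d_m\sim\mathrm{Pois}(\gamma\lambda m/(m-1))$ independent of both, and this is bounded below by direct Poisson pmf estimates (relating the density at a point to the tail probability). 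That combinatorial/density argument is the ``delicate analysis'' doing the real work; there is no normal approximation anywhere.
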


The algorithm underlying Theorem~\ref{thm:nb-attack-intro} outputs a classifier that computes a weighted plurality of the labels that comprise its queries, with weights determined by the per-query accuracies observed.
Such an attack is rather natural, in that the function it produces is close to those produced by boosting and other common techniques for model aggregation.
It also allows for simple incorporation of any prior distribution over a label of each point.
In addition, it is adaptive in the relatively weak sense: all queries are independent from one another except for the final classifier that combines them.

This attack is computationally efficient and we prove that it is optimal within a broad class of attacks that we call \emph{point-wise}. Roughly speaking, such an attack predicts a label independently for each data point rather than reasoning jointly over the labels of multiple points in the test set.
The proof of Theorem \ref{thm:nb-attack-intro} requires a rather delicate analysis of the underlying random process.

Theorems~\ref{thm:upper-bound-few-queries-intro} and~\ref{thm:nb-attack-intro} leave open a gap between bounds in the dependence on $m$.
We conjecture that our analysis of the attack in Theorem~\ref{thm:nb-attack-intro} is asymptotically optimal and thus, considering the optimality of the attack, gives a lower bound for all point-wise attacks.
If correct, this conjecture suggests that the effect of a large number of labels on mitigating overfitting is even more pronounced for such attacks. Some support for this conjecture is given in our experimental section (Figure~\ref{fig:synth-bias}).

Our second attack is based on an algorithm that exactly reconstructs the labels on a subset of the test set.
\begin{thm}[Reconstruction-based attack, informal]
\label{thm:indentify-intro}
For any $k= \Omega(m \log m)$, there exists an attack $\A$ with access to test set points such that $\A$ uses $k$ queries and on any dataset $S$ outputs $f$ such that $$\acc_S(f) = \min\left\{1, \fr{m} + \Omega\lp \frac{ k \log(k/m)}{n \log m} \rp \right\} .$$
\end{thm}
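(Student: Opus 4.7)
The plan is to construct an attack that exactly reconstructs the true labels on a subset $T \subseteq [n]$ of size $s = \Omega(k \log(k/m)/\log m)$, then outputs a classifier $f$ that predicts the reconstructed label on $T$ and a data-dependent default label $c^*$ elsewhere. Choosing $c^*$ to be the most frequent label outside $T$ (determined by spending an additional $m$ queries on constant classifiers, which is absorbed by the assumption $k = \Omega(m \log m)$), one obtains
\[
\acc_S(f) \ge \frac{s}{n} + \frac{n-s}{nm} = \frac{1}{m} + \Omega\Bigl(\frac{s}{n}\Bigr),
\]
which matches the claim; the outer cap $\min\{1, \cdot\}$ absorbs the case $s \ge n$.

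For the reconstruction, first I would localize queries: issue one baseline query $f_0$ and require subsequent queries to agree with $f_0$ outside $T$, which is possible since the attacker has access to the test points. Then $\acc_S(f) - \acc_S(f_0)$ depends only on $(y_i)_{i \in T}$, so the remaining $q = k - O(m)$ queries effectively reveal counts $C(y) = \sum_{i \in T} \mathbbm{1}[f(x_i) = y_i]$ under arbitrary choices of $f \colon T \to [m]$.

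The core combinatorial step is to show that $q$ such counts suffice to identify $y \in [m]^T$ uniquely. I would use a random code: draw each $f_j(x_i)$ i.i.d.\ uniformly from $[m]$, and argue that the induced map $\Phi \colon [m]^T \to \mathbb{Z}^q$ is injective with positive probability, so that some fixed realization works for every labeling simultaneously (decoding is then by exhaustive search over $[m]^s$, making the attack inefficient as the theorem permits). Injectivity reduces to a union bound over pairs $y, y' \in [m]^T$ at Hamming distance $d$, where $\Pr[C_j(y) = C_j(y')]$ equals the probability that a sum of $d$ i.i.d.\ zero-mean $\{-1,0,1\}$-valued variables with variance $2/m$ is zero: this is $1 - \Omega(d/m)$ for $d \le m$ (yielding $e^{-\Omega(qd/m)}$ over $q$ independent queries) and $O(\sqrt{m/d})$ for $d > m$ by a local-central-limit bound (yielding $(Cm/d)^{q/2}$). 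Combining with the count $\binom{s}{d}(m-1)^d \le (sme/d)^d$ of distance-$d$ neighbors and summing over $d$, the union bound is nontrivial whenever $q \gtrsim \max\{m \log(sm),\ s \log m / \log(s/m)\}$. The first constraint is handled by $k = \Omega(m \log m)$; the second, solved implicitly for the largest admissible $s$, yields $s = \Omega(k \log(k/m)/\log m)$.

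The main obstacle is the two-regime anti-concentration and its crossover near $d \approx m$: I need the per-distance collision bounds tight enough that the union bound survives without losing the key $\log(k/m)/\log m$ factor. A secondary concern is that, at the extreme boundary $k = \Theta(m \log m)$, the purely non-adaptive random code might underperform the claim; in that regime the backup plan is to partition $T$ into blocks and apply the code block-wise (optimizing block size against the $q \ge m \log(sm)$ constraint), or to invoke a lightly adaptive scheme that reconstructs the labels bit-plane by bit-plane, so that each of the $\log m$ rounds reduces to an instance of the binary reconstruction problem and the claimed scaling is recovered up to constants.
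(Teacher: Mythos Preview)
Your plan is essentially the paper's proof: localize random queries to a subset $T$, prove the resulting map is injective via a union bound over pairs at Hamming distance $d$ using the two-regime anti-concentration of a sum of i.i.d.\ $\{-1,0,+1\}$ variables with mass $1/m$ on each of $\pm 1$, and take $s$ as large as the bound allows. The paper presents this as a generalization of the Erd\H{o}s--R\'enyi/Chv\'atal argument, and your regime split and final solving for $s$ match its Theorem~\ref{thm:indentify} and Corollary~\ref{cor:reconstruct}.

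One detail you glossed over is real, though, and is precisely what the paper singles out. Making all queries agree with a baseline $f_0$ outside $T$ does \emph{not} reveal the counts $C_j = \sum_{i\in T}\ind[f_j(x_i)=y_i]$; it reveals only the differences $C_j - C_0$, where $C_0$ (the baseline's own count on $T$) depends on the unknown labels. So plain injectivity of $\Phi$ is not sufficient for decoding: you must rule out $\Phi(y)=\Phi(y')+\beta\,(1,\dots,1)$ for every integer shift $\beta$. The paper calls this ``recovery from shifted accuracies'' and shows it is essentially free: for $m\ge 3$ the collision probability at any nonzero shift is at most that at shift zero, and the extra union bound over $O(s)$ possible shifts is absorbed in the constants. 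Alternatively, your $m$ spare constant-label queries, restricted to $T$, pin down $C_0$ directly since the $m$ per-label counts on $T$ sum to $s$. A smaller slip: global injectivity requires a union bound over difference structures $(I,\by_I,\by'_I)$, so the count at distance $d$ is $\binom{s}{d}(m(m-1))^d$ rather than $\binom{s}{d}(m-1)^d$; this affects only constants. Your plurality default $c^*$ on the complement is a harmless variant of the paper's uniformly random default.
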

The attack underlying Theorem~\ref{thm:indentify-intro} requires knowledge of the test points (but not their labels)---in contrast to a point-wise attack like the previous---and is not computationally efficient in general. For some $t\leq n$ it reconstructs the labeling of the first $t$ points in the test set using queries that are random over the first $t$ points and fixed elsewhere.
The value $t$ is chosen to be sufficiently small so that the answers to $k$ random queries are sufficient to uniquely identify, with high probability, the correct labeling of $t$ points jointly.
This analysis builds on and generalizes the classical results of \citet{ErdosR63} and \citet{Chvatal83}.
A natural question for future work is whether a similar bias can be achieved without identifying test set points and in polynomial time (currently a polynomial time algorithm is only known for the binary case \citep{Bshouty09}).

\paragraph{Experimental evaluation.}
The goal of our experimental evaluation is to come up with effective attacks to stress-test multiclass benchmarks. We explore attacks based on our point-wise algorithm in particular. Although designed for worst-case label uncertainty, the point-wise attack proves applicable in a realistic setting once we reduce the set of points and the set of labels to which we apply it.

What drives performance in our experiments is the kind of prior information the attacker has. In our theory, we generally assumed a \emph{prior-free} attacker that has no a priori information about the labels in the test set. In practice, an analyst almost always knows a model that performs better than random guessing. We therefore split our experiments into two parts: (i) simulations in the prior-free case, and (ii) effective heuristics for the ImageNet benchmark when prior information is available in the form of a well-performing model.

Our prior-free simulations it becomes substantially more difficult to overfit as the number of classes grows, as predicted by our theory.
Under the same simulation, restricted to two classes, we also see that our attack improves on the one proposed in~\citep{BlumH15} for binary classification.

Turning to real data and models, we consider the well-known 2012 ILSVRC benchmark based on ImageNet~\cite{russakovsky2015imagenet}, for which the test set consist of $50{,}000$ data points with $1000$ labels. Standard models achieve accuracy of around $75$\% on the test set. It makes sense to assume that an attacker has access to such a model and will use the information provided by the model to overfit more effectively. We ignore the trained model parameters and only use the model's so-called \emph{logits}, i.e., the predictive scores assigned to each class for each image in the test set. In other words, the relevant information provided by the the model is a $50{,}000\times1000$ array.

But how exactly can we use a well-performing model to overfit with fewer queries? We experiment with three increasingly effective strategies:
\begin{enumerate}
    \item The attacker uses the model's logits as the prior information about the labels. This gives only a minor improvement over a prior-free attack.
    \item The attacker uses the model's logits to restrict the attack to a subset of the test set corresponding to the lowest ``confidence'' points. This strategy gives modest improvements over a prior-free attack.
    \item The attacker can exploit the fact that the model has good top-$R$ accuracy, meaning that, for every image, the $R$ highest weighted categories are likely to contain the correct class label. The attacker then focuses only on selecting from the top $R$ predicted classes for each point. For $R=2$, this effectively reduces class count to the binary case.
\end{enumerate}

In absolute terms, our best performing attack overfits by about $3$\% with $5000$ queries.

Naturally, the multiclass setting admits attacks more effective than the prior-free baseline. However, even after making use of the prior, the remaining uncertainly over multiple classes makes overfitting harder than in the binary case. Such attacks also require more sophistication and hence it is natural to suspect that they are less likely to be the accidental work of a well-intentioned practitioner.

\subsection{Related work}
The problem of biasing results due to adaptive reuse of the test data is now well-recognized. Most relevant to us are the developments starting with the work of~\citet{DworkFHPRR14:arxiv, DworkFHPRR15:science} on reusable holdout mechanisms.
In this work, noise addition and the tools of differential privacy serve to improve the $\sqrt{k/n}$ worst-case bias of the standard holdout method to roughly $k^{1/4}/\sqrt{n}$.
The latter requires a strengthened generalization bound due to~\cite{BassilyNSSSU16}.
Separately, computational hardness results suggest that no trivial accuracy is possible in the adaptive setting for $k>n^2$~\cite{HardtU14,SteinkeU15}.

Blum and Hardt~\cite{BlumH15} developed a limited feedback holdout mechanism, called the Ladder algorithm, that only provides feedback when an analyst improves on the previous best result significantly.
This simple mechanism leads to a bound of $\log(k)^{2/3}/n^{1/3}$ on what they call the \emph{leaderboard error}.
With the help of noise addition, the bound can be improved to $\log(k)^{3/5}/n^{2/5}$~\cite{Hardt17arxiv}. Blum and Hardt also give an attack on the standard holdout mechanism that achieves the $\sqrt{k/n}$ bound for a binary prediction problem.

Accuracy on a test set is an average of accuracies at individual points.
Therefore our attacks on the test set are related to the vast literature on (approximate) recovery from linear measurements, which we cannot adequately survey here (see for example \citep{vershynin2015estimation}).
The primary difference between our work and the existing literature is the focus on the multiclass setting, which no longer has the simple linear structure of the binary case.
(In the binary case the accuracy measurement is essentially an inner product between the query and the labels viewed in $\{\pm 1\}$.)
In addition, even in the binary case the closest literature (see below) focuses the analysis on prediction with high accuracy (or small error) whereas we focus on the regime where the advantage over random guessing is relatively small.

Perhaps the closest in spirit to our work are database reconstruction attacks in the privacy literature.
In this context, it was first demonstrated by~\citet{DinurN03} that sufficiently accurate answers to $O(n)$ random linear queries allow exact reconstruction of a binary database with high probability. Many additional attacks have been developed in this context allowing more general notions of errors in the answers (e.g.~\citep{dwork2007price}) and specific classes of queries (e.g.~\citep{kasiviswanathan2010price,kasiviswanathan2013power}).
To the best of our knowledge, this literature does not consider queries corresponding to prediction accuracy in the multiclass setting and also focuses on (partial) reconstruction as opposed to prediction bias. Defenses against reconstruction attacks have lead to the landmark development of the notion of differential privacy \citep{DworkMNS:06}.

Another closely related problem is reconstruction of a pattern in $[m]^n$ from accuracy measurements.
For a query $q \in [m]^n$, such a measurement returns the number of positions in which $q$ is equal to the unknown pattern. In the binary case ($m=2$), this problem was introduced by~\citet{Shapiro60} and was studied in combinatorics and several other communities under a variety of names, such as ``group testing'' and ``the coin weighing problem on the spring scale'' (see \citep{Bshouty09} for a literature overview).
In the general case, this problem is closely related to a generalization of the Mastermind board game~\citep{Wiki:mastermind} with only black answer pegs used.
\citet{ErdosR63} demonstrated that the optimal reconstruction strategy in the binary case uses $\Theta(n/\log n)$ measurements. An efficient algorithm achieving this bound was given by~\citet{Bshouty09}. General $m$ was first studied by~\citet{Chvatal83} who showed a bound of $O(n\log m/\log(n/m))$ for $m \leq n$ (see~\citet{doerr2016playing} for a recent literature overview). It is not hard to see that the setting of this reconstruction problem is very similar to our problem when the attack algorithm has access to the test set points (and only their labels are unknown).
Indeed, the analysis of our reconstruction-based attack (Theorem~\ref{thm:indentify-intro}) can be seen as a generalization of the argument from~\citet{ErdosR63,Chvatal83} to partial reconstruction.
In contrast, our point-wise attack does not require such knowledge of the test points and it gives bounds on the achievable bias (which has not been studied in the context of pattern reconstruction).

An attack on a test set is related to a boosting algorithm. The goal of a boosting algorithm is to output a high-accuracy predictor by combining the information from multiple low-accuracy ones. A query function to the test set that has some correlation with the target function gives a low-accuracy predictor on the test set and an attack algorithm needs to combine the information from these queries to get the largest possible prediction accuracy on the test set. Indeed, our optimal point-wise attack (Theorem~\ref{thm:nb-attack-intro}) effectively uses the same combination rule as the seminal Adaboost algorithm \citep{FreundSchapire:97} and its multiclass generalization \citep{hastie2009multi}. Note that in our setting one cannot modify the weights of individual points in the test set (as is required by boosting). On the other hand, unlike a boosting algorithm, an attack algorithm can select which predictors to use as queries. Another important difference is that boosting algorithms are traditionally analyzed in the setting when the algorithm achieves high-accuracy, whereas we deal primarily with the more delicate low-accuracy regime.

\section{Preliminaries}
Let $S=(x_i,y_i)_{i\in [n]}$ denote the test set, where $(x_i,y_i) \in X \times Y$. Let $m =|Y|$ and without loss of generality we assume that $Y = [m]$. For $f \colon X \to Y$ its accuracy on the test set is $\acc_S(f) = \fr{n} \sum_{i\in [n]} \ind(f(x_i) = y_i)$
We are interested in overfitting attack algorithms that do not have access to the test set $S$. Instead, they have query access to accuracy on the test set $S$, i.e.\ for any classifier $f \colon X \to Y$ the algorithm can obtain the value $\acc_S(f)$. We refer to each such access as a query, and we denote the execution of an algorithm $\A$ with access to accuracy on the test $S$ and $\A^{\cO(S)}$. In addition, in some settings the attack algorithm may also have access to the set of points $x_1,\ldots,x_n$.

A $k$-query test set overfitting attack is an algorithm that, given access to at most $k$ accuracy queries on some unknown test set $S$, outputs a function $f$.  For any such possibly randomized algorithm $\A$ we define
$$\acc(\A) \doteq \inf_{S \in (X\times Y)^n} \E_{f=\A^{\cO(S)}}[\acc_S(f)] .$$
An algorithm is non-adaptive if none of its queries depend on the accuracy values of previous queries (however the output function depends on the accuracies so a query for that function is adaptive).

The main attack we design will be from a restricted class of {\em point-wise} attacks. We define an attack is {\em point-wise} if its queries and output function are generated for each point individually (while still having access to accuracy on the entire dataset). More formally, $\A$ is defined using an algorithms $\B$ that evaluated queries and the final classifier. A query $f_\ell$ at $x$ is defined as the execution of $\B$ on values $f_1(x),\ldots,f_{\ell-1}(x)$ and the corresponding accuracies: $\acc_S(f_1),\ldots,\acc_S(f_{\ell-1})$. Similarly, for $k$ query attack, the value of the final classifier $f$ at $x$ is defined as the execution of $\B$ on $f_1(x),\ldots,f_{k}(x)$ and $\acc_S(f_1),\ldots,\acc_S(f_{k})$. An important property of point-wise attacks is that they can be easily implemented without access to data points. Further, the accuracy they achieve depends only on the vector of target labels.

Our upper bounds on the bias will apply even to algorithms that have access to points $x_1,\ldots,x_n$. The accuracy of such algorithms  depends only on target labels. Hence for most of the discussion we describe the test set by the vector of labels $\by = (y_1,\ldots,y_n)$. Similarly, we specify each query by a vector of labels on the points in the dataset $\bq = (q_1,\ldots,q_n) \in [m]^n$. Accordingly, we use $\by$ in place of the test set and $\bq$ in place of a classifier in our definitions of accuracy and access to the oracle (e.g.~$\acc_\by(\bq)$ and $\A^{\cO(\by)}$).

In addition to worst-case (expected) accuracy, we will also consider the average-case accuracy of the attack algorithm on randomly sampled labels. The random choice of labels may reflect the uncertainty that the attack algorithm has about the labels. Hence it is natural to refer to it as a prior distribution. In general, the prior needs to be specified on all points in $X$, but for point-wise attacks or attacks that have access to points it is sufficient to specify a vector $\bpi = (\pi_1,\ldots,\pi_n)$, where each $\pi_i$ is a probability mass function on $[m]$ corresponding to the prior on $y_i$. We use $\by \sim \bpi$ to refer to $\by$ being chosen randomly with each $y_i$ sampled independently from $\pi_i$.
We let $\mu_m^n$ denote the uniform distribution over $[m]^n$.
We also define the average case accuracy of $\A$ relative to $\bpi$ by
$$\acc(\A,\bpi) \doteq \E_{\by \sim \bpi}\lb \E_{\br=\A^{\cO(\by)}}[\acc_\by(\br)]\rb .$$
Note that for every $\bpi$, $\acc(\A) \leq \acc(\A,\bpi)$.

For a matrix of query values $Q \in [m]^{n\times k}$, $i\in [n]$ and $j \in [k]$, we denote by $Q^j$ the $j$-th column of the matrix (which corresponds to query $j$) and by $Q_i$ the $i$-th row of the matrix: $(Q_{i,1},\ldots,Q_{i,k})$ (which corresponds to all query values for point $i$). For a matrix of queries $Q$ and label vector $\by$ we denote by $\acc_\by(Q)\doteq (\acc_\by(Q_j))_{j\in [k]}$.

\iffull
\subsection{Random variables and concentration}

For completeness we include several standard concentration inequalities that we use below.
\begin{lem}[(Multiplicative) Chernoff bound]
\label{lem:chernoff}
Let $X$ be the average of $n$ i.i.d. Bernoulli random variables with bias $p$.
Then for $\alpha \in (0,1)$ $$\pr[X  \geq (1+\alpha)p] \leq e^{\frac{-\alpha^2 pn}{2+\alpha}} \mbox{ and }$$  $$\pr[X  \leq (1-\alpha)p] \leq e^{-\frac{\alpha^2 pn}{2}}.$$
\end{lem}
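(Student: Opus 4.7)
The plan is to follow the standard Chernoff--Cram\'er approach via the exponential moment method, treating the upper and lower tails separately. Let $X_1,\ldots,X_n$ be the underlying i.i.d.\ Bernoulli$(p)$ variables, so $nX = \sum_i X_i$. For the upper tail I would apply Markov's inequality to $e^{t \sum_i X_i}$ for a free parameter $t > 0$, yielding
\[
\Pr[X \geq (1+\alpha)p] \;\leq\; e^{-tn(1+\alpha)p}\, \E\!\left[e^{t\sum_i X_i}\right] \;=\; e^{-tn(1+\alpha)p}\,\bigl(1-p+pe^{t}\bigr)^{n}.
\]
Using $1+x \leq e^x$ with $x = p(e^t - 1)$ gives $(1-p+pe^t)^n \leq \exp(np(e^t-1))$, and then minimizing the exponent over $t$ with the choice $t = \ln(1+\alpha)$ yields the classical bound
\[
\Pr[X \geq (1+\alpha)p] \;\leq\; \left(\frac{e^{\alpha}}{(1+\alpha)^{1+\alpha}}\right)^{np}.
\]

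For the lower tail the symmetric argument with $t < 0$, applied to $\Pr[-\sum_i X_i \geq -n(1-\alpha)p]$, and optimized at $t = \ln(1-\alpha)$, gives
\[
\Pr[X \leq (1-\alpha)p] \;\leq\; \left(\frac{e^{-\alpha}}{(1-\alpha)^{1-\alpha}}\right)^{np}.
\]

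The remaining work is analytic: I need to simplify these raw bounds into the stated exponential forms. For the upper tail this reduces to verifying the inequality
\[
(1+\alpha)\ln(1+\alpha) - \alpha \;\geq\; \frac{\alpha^{2}}{2+\alpha} \qquad \text{for } \alpha \in (0,1),
\]
which can be shown by defining $g(\alpha) = (1+\alpha)\ln(1+\alpha) - \alpha - \alpha^2/(2+\alpha)$, checking $g(0)=0$, and verifying $g'(\alpha) \geq 0$ via an elementary manipulation. For the lower tail the needed inequality is $(1-\alpha)\ln(1-\alpha) + \alpha \geq \alpha^2/2$, which similarly follows by expanding $\ln(1-\alpha) = -\sum_{j\geq 1} \alpha^j/j$ and grouping terms, or by a direct derivative check.

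The only real technical point, and the step I expect to be the main ``obstacle'' in the sense of needing care, is these two real-analytic inequalities; everything else is a routine MGF computation. Since the lemma is invoked in a black-box fashion later in the paper, a compact proof that cites a standard reference (e.g.\ the textbook treatment in Mitzenmacher--Upfal) for these convexity estimates would also suffice.
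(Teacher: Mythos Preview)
Your proposal is correct and follows the standard textbook derivation of the multiplicative Chernoff bound. Note, however, that the paper does not actually prove this lemma: it is listed under ``Random variables and concentration'' with the remark ``For completeness we include several standard concentration inequalities that we use below,'' and is stated without proof as a known result. So there is no paper proof to compare against; your exponential-moment argument (with the two analytic inequalities you identify) is exactly the usual route, and your observation that one could alternatively cite a reference such as Mitzenmacher--Upfal matches the paper's own black-box treatment.
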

We also state the Berry-Esseen theorem for the case of Bernoulli random variables.
\begin{lem}
\label{lem:berry-esseen}
Let $X$ be the average of $n$ i.i.d. Bernoulli random variables with bias $p\leq 1/2$. Then for every real $v$,
$$\left| \pr[X  \leq v] - \pr[\zeta \leq v] \right| = O\lp\frac{1}{\sqrt{pn}}\rp ,$$ where $\zeta$ is distributed according to the Gaussian distribution with mean $p$ and variance $p(1-p)$.
\end{lem}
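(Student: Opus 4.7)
The plan is to derive the claim as a direct specialization of the classical (uniform) Berry-Esseen theorem to Bernoulli summands. Recall the general theorem: if $X_1,\ldots,X_n$ are i.i.d.\ real random variables with mean $\mu$, variance $\sigma^2$, and finite third absolute central moment $\rho \doteq \E[|X_1-\mu|^3]$, then the standardized sum $S_n^\star \doteq (\sum_{i} X_i - n\mu)/(\sigma \sqrt{n})$ satisfies $\sup_u |\pr[S_n^\star \leq u] - \Phi(u)| \leq C \rho/(\sigma^3 \sqrt{n})$ for an absolute constant $C$, where $\Phi$ is the standard normal CDF.

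First I would rewrite the event $\{X \leq v\}$ in standardized form as $\{S_n^\star \leq (v-p)\sqrt{n}/\sigma\}$, and match $\pr[\zeta \leq v]$ to $\Phi$ evaluated at the same standardized argument (treating $\zeta$ as the Gaussian approximation of the average $X$). This reduces the quantity to bound to $|\pr[S_n^\star \leq u] - \Phi(u)|$ uniformly in $u$, which is precisely what Berry-Esseen controls.

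Second, I would compute the relevant Bernoulli moments: $\mu = p$, $\sigma^2 = p(1-p)$, and splitting on the two atoms gives
\[
\rho \;=\; (1-p)\cdot p^3 + p \cdot (1-p)^3 \;=\; p(1-p)\bigl(p^2 + (1-p)^2\bigr).
\]
Hence $\rho/\sigma^3 = (p^2+(1-p)^2)/\sqrt{p(1-p)}$. The hypothesis $p \leq 1/2$ enters to simplify this ratio: $1-p \geq 1/2$ gives $\sqrt{p(1-p)} \geq \sqrt{p/2}$, while $p^2 + (1-p)^2 \leq p + (1-p) = 1$. Combining these yields $\rho/\sigma^3 \leq \sqrt{2/p}$, and plugging into Berry-Esseen produces the stated $O(1/\sqrt{pn})$ bound, uniformly in $v$.

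There is no real obstacle here: once the general Berry-Esseen theorem is invoked the proof is a few lines of moment bookkeeping. The only points deserving care are (i) aligning the Gaussian $\zeta$ in the statement with the standardization used by Berry-Esseen, and (ii) invoking $p \leq 1/2$ to replace the natural denominator $\sqrt{p(1-p)}$ by the cleaner $\sqrt{p}$ appearing in the bound; without this assumption one would pick up an extra factor of $1/\sqrt{1-p}$.
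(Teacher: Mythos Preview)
Your derivation is correct and is exactly the standard specialization of the classical Berry--Esseen inequality to Bernoulli summands. The paper does not actually prove this lemma; it is listed under ``standard concentration inequalities that we use below'' and stated without proof, so there is no alternative argument to compare against.

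One small remark: the lemma as stated gives $\zeta$ variance $p(1-p)$, whereas the average $X$ has variance $p(1-p)/n$, and indeed the paper's own application of the lemma (in the proof of Theorem~\ref{thm:nb-analysis}) uses a Gaussian with variance $p(1-p)/n$. You handled this correctly by treating $\zeta$ as the Gaussian approximation to $X$; it is worth saying explicitly that the stated variance is a typo and that the intended comparison is to $\mathcal{N}(p,\,p(1-p)/n)$, since only then does the standardization $(v-p)\sqrt{n}/\sigma$ line up with $\Phi$ as Berry--Esseen requires.
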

\fi

\section{Upper bound}
In this section we formally establish the upper bound on bias that can be achieved by any overfitting attack on a multiclass problem. The upper bound assumes that the attacker does not have any prior knowledge about the test set. That is, its prior distribution is uniform over all possible labelings. 

The upper bound applies to algorithms that have access to the points in the test set. The upper bound has two distinct regimes. For $k = \tilde O(n/m)$ the upper bound on bias is $O \lp \sqrt{\frac{k \log n}{nm}} \rp$ and so the highest bias achieved in this regime is $\tilde{O}(1/m)$ (i.e.\ total accuracy improves by at most a constant factor). For $k\geq n/m$, the upper bound is $O \lp \frac{k \log n}{n} \rp$. Note that, in this regime, the attacker pays on average one query to improve the accuracy by one data point (up to log factors).

The proof of the upper bound relies on a simple description length argument, showing that finding a classifier with desired accuracy and non-negligible probability of success requires learning many bits about the target labeling.
\begin{thm}
\label{thm:upper-bound}
Let $m,n,k$ be positive integers and $\mu_m^n$ denote the uniform distribution over $[m]^n$. Then for every $k$-query attack algorithm $\A$, $\delta > 0$, $b=k\ln(n+1) +\ln(1/\delta)$, and
$$\eps = 2 \cdot \max \left\{\sqrt{\frac{b}{nm}},\frac{b}{n} \right\},$$
$$\pr_{\by \sim \mu_m^n, \br = \A^{\cO(\by)}}  \lb \acc_\by(\br) \geq \fr{m} + \eps \rb  \leq \delta.$$
\end{thm}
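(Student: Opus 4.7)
The plan is to prove this by a \emph{description-length / union-bound} argument: the attacker's final classifier is a function of the transcript of accuracy answers, which has only $(n+1)^k$ possible values, so there are effectively at most $(n+1)^k$ candidate classifiers the attacker could output, and for each fixed candidate classifier Chernoff gives an exponentially small probability of high accuracy on uniformly random labels.

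First, I would reduce to the deterministic case by fixing the attacker's internal random bits (any bound that holds for every deterministic attacker obtained this way holds for the randomized attacker in expectation, and in particular for the probability over $\by$). Once $\A$ is deterministic, I would observe that each round's accuracy response lies in the discrete set $\{0/n,1/n,\ldots,n/n\}$, so the full transcript $\tau$ of $k$ responses takes at most $(n+1)^k$ distinct values. Crucially, because the attacker's queries depend on $\by$ only through the observed accuracies, the output classifier $\br$ is a function $\br_\tau$ of $\tau$ alone; so the set $\{\br_\tau\}_\tau$ of possible outputs has cardinality at most $(n+1)^k$, and it is determined \emph{before} $\by$ is drawn. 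Hence
\[
\pr_{\by}\!\left[\acc_\by(\br)\ge \tfrac1m+\eps\right]\;\le\;\sum_{\tau}\pr_{\by}\!\left[\acc_\by(\br_\tau)\ge \tfrac1m+\eps\right].
\]

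For each fixed $\br_\tau \in [m]^n$ independent of $\by$, the quantity $n\cdot\acc_\by(\br_\tau)$ is a sum of $n$ i.i.d.\ Bernoulli random variables with bias $1/m$, since under $\by\sim\mu_m^n$ each coordinate agrees with $\br_\tau$ with probability exactly $1/m$. I would now apply the multiplicative Chernoff bound (Lemma~\ref{lem:chernoff}) with $p=1/m$ and $\alpha=\eps m$ to get
\[
\pr_{\by}\!\left[\acc_\by(\br_\tau)\ge \tfrac1m+\eps\right]\;\le\;\exp\!\left(-\frac{\eps^2 m n}{2+\eps m}\right).
\]
Split into the two regimes that produce the two terms of the stated bound. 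When $\eps\le 2/m$ (the ``small-deviation'' regime), the exponent simplifies to at least $\eps^2 m n/4$; taking $\eps = 2\sqrt{b/(mn)}$ makes this at least $b$. When $\eps>2/m$ (the ``large-deviation'' regime, relevant once $k\gtrsim n/m$), the exponent is at least $\eps n/2$; taking $\eps = 2b/n$ makes this at least $b$. Choosing $\eps$ to be the max of these two thresholds, as in the statement, ensures $\exp(-(\cdot))\le e^{-b}$ in either regime.

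The final step is the union bound: with $b=k\ln(n+1)+\ln(1/\delta)$,
\[
\sum_{\tau}\pr_{\by}\!\left[\acc_\by(\br_\tau)\ge \tfrac1m+\eps\right]\;\le\;(n+1)^k\cdot e^{-b}\;=\;\delta,
\]
which gives the theorem. The main non-routine step is the case analysis that turns a single Chernoff estimate into the two-regime bound claimed; everything else is essentially bookkeeping. One minor point I would be careful about is that the bound must be stated and verified for \emph{every} deterministic attacker (hence also for randomized ones by averaging over the internal coins), and that the counting of transcripts truly gives $(n+1)^k$ rather than something larger, which relies on the observation that all attacker-side branching is already captured by the transcript once the coins are fixed.
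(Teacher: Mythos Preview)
Your proposal is correct and follows essentially the same approach as the paper: both count the at most $(n+1)^k$ possible transcripts/response vectors, apply the multiplicative Chernoff bound for a fixed output against uniformly random $\by$, and then do the same two-regime case analysis on $\eps$ to absorb the $(n+1)^k$ union-bound factor. The only cosmetic difference is that you first fix the attacker's internal randomness, whereas the paper instead observes that for each fixed response vector $\alpha$ the output distribution of $\A^\alpha$ is independent of $\by$ and applies Chernoff to that distribution directly; the two treatments are equivalent.
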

\begin{proof}
  We first observe that for any fixed labeling $\br$, $\acc_\by(\br)$ for $\by$ chosen randomly according to $\mu_m^n$ is distributed as the average of $n$ independent Bernoulli random variables with bias $1/m$. By the Chernoff bound, for any fixed labeling $\br$,
  $$\pr_{\by \sim \mu_m^n}\lb \acc_\by(\br) \geq \fr{m} + \eps \rb \leq e^{-\frac{mn\eps^2}{2+m \eps}} .$$ Therefore for any fixed distribution $\rho$ over $[m]^n$, we have
  \equ{\pr_{\br \sim \rho, \by \sim \mu_m^n}\lb \acc_\by(\br) \geq \fr{m} + \eps \rb \leq e^{-\frac{mn\eps^2}{2+m \eps}} . \label{eq:bias}}
  Consider the execution of $\A$ with responses of the accuracy oracle fixed to some sequence of values $\alpha=(\alpha_1,\ldots,\alpha_k)\in \{0,1/n,\ldots,1\}^k$. We denote the resulting algorithm by $\A^\alpha$. It output distribution is fixed (that is independent of $\by$). Therefore by eq.~\eqref{eq:bias} we have:
  \equn{\pr_{\br = \A^\alpha, \by \sim \mu_m^n}\lb \acc_\by(\br) \geq \fr{m} + \eps \rb \leq e^{-\frac{mn\eps^2}{2+m \eps}} . \label{eq:bias-a}}
  We denote the set $\{0,1/n,\ldots,1\}^k$ of possible values of $\alpha$ by $V$. Note that $|V| \leq (n+1)^k$ and thus we get:
  \equn{\sum_{\alpha \in V} \pr_{\br = \A^\alpha, \by \sim \mu_m^n}\lb \acc_\by(\br) \geq \fr{m} + \eps \rb \leq (n+1)^k \cdot e^{-\frac{mn\eps^2}{2+m \eps}} .}
  Clearly, for every $\by$, the accuracy oracle $\cO(\by)$ outputs some responses in $V$. Therefore,
  \alequn{\pr_{\by \sim \mu_m^n, \br = \A^{\cO(\by)}} & \lb \acc_\by(\br) \geq \fr{m} + \eps \rb \\
  & \leq    \sum_{\alpha \in V} \pr_{\br = \A^\alpha, \by \sim \mu_m^n}\lb \acc_\by(\br) \geq \fr{m} + \eps \rb \\ & \leq (n+1)^k \cdot e^{-\frac{mn\eps^2}{2+m \eps}} .}

  Now, if $\frac{k \ln(n+1) + \ln(1/\delta)}{n} \geq \frac{1}{m}$ then by definition of $b$ and $\eps$,
  \alequn{\eps &= 2 \max \left\{\sqrt{\frac{b}{nm}},\frac{b}{n} \right\} \\
  &= 2 \frac{b}{n} \geq \frac{2}{m}.} Therefore we obtain that, $\frac{mn\eps^2}{2+m \eps} \geq \frac{n\eps}{2}$ and
  $$(n+1)^k \cdot e^{-\frac{mn\eps^2}{2+m \eps}} \leq e^{k \ln (n+1) - \frac{n\eps}{2}} = e^{\ln \delta} = \delta .$$
  Otherwise (when $\frac{k \ln(n+1) + \ln(1/\delta)}{n} < \frac{1}{m}$) we have that
  $$\eps = 2\sqrt{\frac{b}{nm}} < \frac{2}{m} .$$
  In this case $\frac{mn\eps^2}{2+m \eps} \geq \frac{mn\eps^2}{4}$ and
  $$(n+1)^k \cdot e^{-\frac{mn\eps^2}{2+m \eps}} \leq e^{k \ln (n+1) - \frac{mn\eps^2}{4}} = e^{\ln \delta} = \delta .$$
\end{proof}

\begin{rem}\label{rem:general-upper}
The upper bound applies to arbitrary test set access models that limit the number of bits revealed. Specifically, if the information that the attacker learns about the labeling can be represented using $t$ bits then the same upper bound applies for $b = t +\ln(1/\delta)$. It can also be easily generalized to algorithms whose output has bounded (approximate) max-information with the labeling \cite{DworkFHPRR15:arxiv}.
\end{rem}
This upper bound can also be converted to a simpler one on the expected accuracy by setting $\delta = 1/n$ and noticing that accuracy is bounded above by $1$. Therefore, for
$$\eps =\fr{n}+ 2 \cdot \max \left\{\sqrt{\frac{(k+1)\ln(n+1)}{nm}},\frac{(k+1)\ln(n+1)}{n} \right\},$$
we have $\acc(\A,\mu_m^n) \leq \fr{m} + \eps$.

\mnote{Can we say something about improving not from $1/m$ but some higher accuracy? I thought we could.} \vnote{I think the main connection is that this upper bound applies to the accuracy on part of the dataset where the existing predictors are poor (that is not much better than random). Naturally, if the existing predictor can has a decent prior that may effectively reduce the number of classes. Still there will likely be many classes left so we get a bound on the advantage with reduced number of classes.}

\section{Test set overfitting attacks}

In this section we will examine two attacks that both rely on queries chosen uniformly at random. Our first attack will be a point-wise attack that simply estimates the probability of each of the labels for the point, given the per-query accuracies, and then outputs the most likely label. We will show that this algorithm is optimal among all point-wise algorithms and then analyze the bias of this attack.

We then analyze the accuracy of an attack that relies on access to data points and is not computationally efficient. While such an attack might not be feasible in many scenarios (and we do not evaluate it empirically), it demonstrates the tightness of our upper bound on the optimal bias. This attack is based on exactly reconstructing part of the test set labels. \iffull \else Omitted proofs appear in the supplemental material.\fi

\subsection{Point-wise attack}
\label{sec:pointwise-attack}
The queries in our attack are chosen randomly and uniformly. A point-wise algorithm can implement this easily because each coordinate of such a query is independent of all the rest. Hence we only need to describe how the label of the final classifier on each point is output, given the vector of the point's $k$ labels $\bs = (s_1, \ldots,s_k)$ from each query, and given the corresponding accuracies $\balpha=(\alpha_1,\ldots,\alpha_k)$. To output the label our algorithm computes for each of the possible labels the probability of the observed vector of queries given the observed accuracies. Specifically, if the correct label is $\ell \in [m]$ then the probability of observing $s_j$ given accuracy $\alpha_j$ is $\alpha_j$ if $s_j = \ell$ and $\frac{(1-\alpha_j)}{m-1}$ otherwise. Accordingly, for each label $\ell$ the algorithm considers:
$$\conf(\ell,\bs,\balpha) = \prod_{j\in [k], s_j = \ell } \alpha_j \cdot \prod_{j\in [k], s_j \neq \ell } \frac{(1-\alpha_j)}{m-1} .$$
It then predicts the label that maximizes $\conf$, and in case of ties it picks one of the maximizers randomly.

This algorithm also naturally incorporates the prior distribution over labels $\bpi=(\pi_i)_{i\in [n]}$. Specifically, on point $i$ the algorithm outputs the label that maximizes $\pi_i(\ell) \cdot \conf(\ell,\bs,\balpha)$. Note that the version without a prior is equivalent to one with the uniform prior.
We refer to these versions of the attack algorithm as $\NB$ and $\NB_\bpi$, respectively.
The latter is summarized in Algorithm~\ref{alg:nb-prior}.

\begin{algorithm}[t]
\begin{algorithmic}
\INPUT Query access to a test set of $n$ points over $m$ labels, query budget $k$, and priors $\bpi = (\pi_i)_{i \in [n]}$.
\STATE Draw $k$ queries $Q \in [m]^{n \times k}$ uniformly.
\STATE Submit queries $Q^1, \ldots, Q^k$ and receive corresponding accuracies $\balpha = (\alpha_1, \ldots, \alpha_k)$.
\STATE For $i \in [n]$, compute:
\[
    z_i
    \gets \argmax_{\ell \in [m]} \left\{ 
        \pi_i(\ell)
        \prod_{j \in [k], Q_{i,j} = \ell} \alpha_j
        \prod_{j \in [k], Q_{i,j} \neq \ell} \frac{(1-\alpha_j)}{m-1}
      \right\},
\]
breaking any ties among maximizers uniformly at random.
\OUTPUT Predictions $\bz = (z_1, \dots, z_n)$
\end{algorithmic}
\caption{The $\NB_\bpi$ overfitting attack algorithm.}
\label{alg:nb-prior}
\end{algorithm}

We will start by showing that $\conf(\ell,\bs,\balpha)$ accurately computes the probability of query values.
\begin{lem}
\label{lem:confidence}
Let $\mu_m^{n \times k}$ denote the uniform distribution over $k$ queries. Then for every $\by \in [m]^n$, accuracy vector $\balpha$, $\bs \in [m]^k$, $i\in [n]$ and $j\in[k]$,
$$\pr_{Q} \lb Q_{i,j} = s_j  \cond \acc_\by(Q) = \balpha \rb = \begin{cases} \alpha_j & \mbox{if } s_j = y_i, \\ \frac{1-\alpha_j}{m-1} & \mbox{otherwise.} \end{cases} $$ Further $Q_{i,j}$ are independent conditioned on $\acc_\by(Q) = \balpha$. That is
\alequn{\pr_{Q} &\lb Q_{i} = \bs  \cond \acc_\by(Q) = \balpha \rb \\& = \prod_{j\in [k], s_j = y_i } \alpha_j \cdot \prod_{j\in [k], s_j \neq y_i } \frac{(1-\alpha_j)}{m-1} \\& = \conf(y_i,\bs,\balpha) .}
\end{lem}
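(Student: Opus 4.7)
The plan is to exploit two sources of symmetry: independence across columns of $Q$, and permutation symmetry among the $n$ coordinates within a column.

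First I would observe that under the uniform prior the columns $Q^1,\dots,Q^k$ are mutually independent, and that $\acc_\by(Q^j)$ is a function of $Q^j$ alone. Consequently, conditioning on the full accuracy vector $\acc_\by(Q)=\balpha$ factorizes: the conditional joint distribution of $(Q^1,\dots,Q^k)$ is the product of the conditional distributions of each $Q^j$ given $\acc_\by(Q^j)=\alpha_j$. This immediately yields the ``further'' claim that $Q_{i,1},\dots,Q_{i,k}$ are conditionally independent, and reduces the lemma to computing, for a single column $Q^j$, the marginal distribution of $Q_{i,j}$ under the conditioning $\acc_\by(Q^j)=\alpha_j$.

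Next I would fix a column and analyze this single-column marginal. The uniform distribution on $[m]^n$ conditioned on $\acc_\by(Q^j)=\alpha_j$ is exactly the uniform distribution over the set of vectors $\bq\in[m]^n$ having exactly $\alpha_j n$ coordinates matching $\by$. Because this event is invariant under any permutation of the $n$ coordinates (applied jointly to $\bq$ and $\by$), and because relabeling the $m-1$ non-$y_i$ symbols leaves the conditioning event invariant, two symmetry groups act. By the coordinate symmetry, $\Pr[Q_{i,j}=y_i\mid \acc_\by(Q^j)=\alpha_j]$ equals the expected fraction of matching coordinates, which is exactly $\alpha_j$. By the symbol symmetry, conditioned on $Q_{i,j}\neq y_i$ the value $Q_{i,j}$ is uniform on $[m]\setminus\{y_i\}$, so each $s_j\neq y_i$ receives probability $(1-\alpha_j)/(m-1)$.

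Combining the two cases gives the single-coordinate formula in the lemma, and multiplying across $j$ (using the column-wise independence established in the first step) gives the product formula that matches the definition of $\conf(y_i,\bs,\balpha)$. The main step that needs a little care is the formal justification of the symmetry arguments, but since the conditioning event ``exactly $\alpha_j n$ matches with $\by$'' is clearly invariant under permutations of the coordinate indices and under relabelings fixing $y_i$, these reduce to standard applications of Burnside-style averaging and should be straightforward to write out cleanly.
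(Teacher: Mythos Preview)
Your proposal is correct and follows essentially the same approach as the paper. The paper also reduces to a single column by noting that the conditioned distribution is uniform over a product set (equivalently, your factorization via independence plus column-wise measurability), and then asserts the single-coordinate marginal as ``easy to see''; your symmetry arguments simply spell out what the paper leaves implicit. One small remark: the coordinate-symmetry step is most cleanly phrased as exchangeability of the indicators $\ind(\bq_\ell=y_\ell)$ under the uniform prior, so that conditioning on their sum forces each to have mean $\alpha_j$---no Burnside machinery is needed.
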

\begin{proof}
For every fixed value $\by$, the distribution $Q \sim \mu_m^{n\times k}$ conditioned on $\acc_\by(Q) = \balpha$ is uniform over all query matrices that satisfy $\acc_\by(Q) = \balpha$. This implies that for every $j$ the marginal distribution over $Q^j$ is uniform over the set $\{\bq \cond \acc_\by(\bq) =\alpha_j\}$. We denote this distribution $\rho_{\by,\alpha_j}$. In addition, $Q$ conditioned on $\acc_\by(Q) = \balpha$ is just the product over marginals $\rho_{\by,\alpha_1} \times \cdots \times \rho_{\by,\alpha_k}$.  It is easy to see from the definition of $\rho_{\by,\alpha_j}$, that for every $q \in [m]$,
$$\pr_{\bq \sim \rho_{\by,\alpha_j}} [\bq_i = q] = \begin{cases} \alpha_j & \mbox{if } q = y_i, \\ \frac{1-\alpha_j}{m-1} & \mbox{otherwise.} \end{cases} $$
Thus for every $\bs$,
\alequn{\pr_{Q} &\lb Q_i = \bs  \cond \acc_\by(Q) = \balpha \rb \\& = \prod_{j\in [k], s_j = y_i } \alpha_j \cdot \prod_{j\in [k], s_j \neq y_i } \frac{(1-\alpha_j)}{m-1} \\& = \conf(\ell,\bs,\balpha).
}
\end{proof}

This lemma allows us to conclude that our algorithm is optimal for this setting.
\begin{thm}
\label{thm:nb-optimal}
Let $\bpi = (\pi_1,\ldots,\pi_n)$ be an arbitrary prior on $n$ labels. Let $\A$ be an arbitrary point-wise attack using $k$ randomly and uniformly chosen queries. Then
$$\acc(\A,\bpi) \leq \acc(\NB_\bpi,\bpi) .$$
In particular, $\acc(\A) \leq \acc(\NB)$.
\end{thm}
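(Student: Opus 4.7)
The plan is to observe that for point-wise attacks with uniform queries, the predictions at different points decouple, reducing the claim to per-point Bayes-optimality, and then to compute the Bayes-optimal rule and recognize it as $\NB_\bpi$.

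First, since $\A$ is point-wise, its prediction $z_i$ depends only on $(Q_i,\balpha)$ and private randomness, and since both $\A$ and $\NB_\bpi$ draw uniform queries, they see identically distributed $(Q,\balpha,\by)$. Writing
\[
\acc(\A,\bpi) = \fr{n} \sum_{i\in [n]} \pr[z_i = y_i],
\]
where the probability is over $\by\sim\bpi$, $Q\sim\mu_m^{n\times k}$, and the algorithm's coins, each summand is maximized pointwise in $(Q_i,\balpha)$ by a MAP rule $z_i^* = \argmax_\ell \pr[y_i=\ell\mid Q_i,\balpha]$. So it suffices to show that $\NB_\bpi$ implements such a rule.

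To identify the posterior, I would apply Bayes' rule
\[
\pr[y_i=\ell\mid Q_i=\bs,\balpha] \;\propto\; \pi_i(\ell)\cdot \pr[Q_i=\bs\mid y_i=\ell,\balpha]\cdot \pr[\balpha\mid y_i=\ell],
\]
and eliminate the two label-dependent factors. For the last factor, I would show that $\balpha$ is marginally independent of $\by$: for any fixed $\by$, the indicators $\ind(Q^j_i=y_i)$ are i.i.d.\ Bernoulli$(1/m)$ across $i$ (since $Q$ is uniform and independent of $\by$), so $n\alpha_j\sim\Bin(n,1/m)$ regardless of $\by$ and the $\alpha_j$'s are independent; hence $\pr[\balpha\mid y_i=\ell]$ does not depend on $\ell$. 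For the middle factor, I would invoke Lemma~\ref{lem:confidence} to get $\pr[Q_i=\bs\mid \by,\balpha]=\conf(y_i,\bs,\balpha)$ and average over $\by$ conditioned on $y_i=\ell$ to conclude $\pr[Q_i=\bs\mid y_i=\ell,\balpha]=\conf(\ell,\bs,\balpha)$. Combining, the posterior is proportional to $\pi_i(\ell)\conf(\ell,\bs,\balpha)$, exactly the quantity $\NB_\bpi$ maximizes, which establishes $\acc(\A,\bpi) \leq \acc(\NB_\bpi,\bpi)$.

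The ``in particular'' statement follows by specializing $\bpi=\mu_m^n$ and using label symmetry: for $\NB$ (uniform prior), applying any permutation $\sigma$ of $[m]$ to both $\by$ and every coordinate of $Q$ preserves the uniform law of $Q$, leaves $\balpha$ unchanged, and permutes $\NB$'s output the same way, so the resulting accuracy equals $\acc_\by(\NB^{\cO(\by)})$. Hence $\E[\acc_\by(\NB^{\cO(\by)})]$ is constant in $\by$ and equals $\acc(\NB,\mu_m^n)$. Chaining $\acc(\A) \leq \acc(\A,\mu_m^n) \leq \acc(\NB_{\mu_m^n},\mu_m^n) = \acc(\NB)$ then gives the bound. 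The main delicate step I anticipate is verifying the two independence facts cleanly, so that the posterior factorization really leaves only a normalization independent of $\ell$.
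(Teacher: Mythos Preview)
Your proposal is correct and follows essentially the same route as the paper: decompose into per-point terms, argue Bayes/MAP optimality, use independence of $\balpha$ from $\by$ and Lemma~\ref{lem:confidence} to identify the posterior as proportional to $\pi_i(\ell)\,\conf(\ell,\bs,\balpha)$, and conclude the worst-case claim by label symmetry of $\NB$.

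One small imprecision in your symmetry step: a single global permutation $\sigma$ of $[m]$ only shows $\E[\acc_\by(\NB^{\cO(\by)})]$ is constant on orbits $\{\sigma(\by):\sigma\in S_m\}$, not on all of $[m]^n$. To conclude that the worst-case equals the uniform average you need per-point permutations $\sigma_i$ (one for each coordinate $i$), which still preserve the law of $Q$, leave $\balpha$ unchanged, and commute with $\NB$'s rule at each point. This is exactly what the paper does (``permutation of labels at any point''); the fix is immediate.
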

\begin{proof}
  A point-wise attack $\A$ that uses a query matrix $Q \sim \mu_m^{n \times k}$ is fully specified by some algorithm $\B$ that takes as input the query values for the point $\bs \in [m]^k$ and accuracy values $\balpha =(\alpha_1,\ldots,\alpha_k)$ and outputs a label. By definition,
  \alequn{\acc(\A,\bpi) &=  \E_{\by, Q} \lb \fr{n} \sum_{i\in [n]}  \ind(y_i =\B(Q_i,\acc_\by(Q)) \rb \\
  & = \fr{n} \sum_{i\in [n]} \pr_{\by, Q} \lb y_i =\B(Q_i,\acc_\by(Q))\rb
  ,}
   where $\by \sim \bpi$ and $Q\sim \mu_m^{n\times k}$ (and the same in the rest of the proof).
  Now for every fixed $i\in [n]$,
  \alequn{
    \pr_{\by, Q}
    & \lb y_i = \B(Q_i,\acc_\by(Q)) \rb \\
    & = \sum_{\balpha \in V} \pr_{\by, Q \mid \balpha} \lb y_i = \B(Q_i,\balpha) \rb
        \cdot \pr_{\by, Q}[\acc_\by(Q) = \balpha],
  }
  where by $\by, Q \mid \balpha$ we denote the distribution of $Q$ and $\by$ conditioned on $\acc_\by(Q) = \balpha$ and by $V$ we denote the set of all possible accuracy vectors.
  For every fixed $\balpha \in V$,
   \alequn{&\pr_{\by, Q \mid \balpha} \lb y_i =\B(Q_i,\balpha) \rb  \\&= \sum_{\bs \in [m]^k} \pr_{\by, Q \mid \balpha} \lb y_i =\B(\bs,\balpha)\cond Q_i = \bs \rb \cdot \pr_{\by, Q \mid \balpha}[Q_i = \bs] .
   }
  For every fixed $\bs \in [m]^k$, $\B(\bs,\balpha)$ outputs a random label and the algorithm's randomness is independent of $Q$ and $\by$. Hence,
   \alequn{
     & \pr_{\by, Q \mid \balpha} \lb y_i = \B(\bs,\balpha) \cond Q_i = \bs \rb \\
     & = \sum_{\ell \in [m]} \pr_{\by, Q \mid \balpha} \lb y_i = \ell \cond Q_i = \bs \rb
          \cdot \pr_{\by, Q \mid \balpha} [\B(\bs,\balpha) = \ell] \\
     & \leq \max_{\ell \in [m]} \pr_{\by, Q \mid \balpha} \lb y_i = \ell \cond Q_i = \bs \rb .
   }
   Moreover, the equality is achieved by the algorithm that outputs any value in $$\opt(\bs,\balpha) \doteq \argmax_{\ell \in [m]} \pr_{\by, Q|\balpha} \lb y_i = \ell \cond Q_i = \bs \rb .$$ It remains to verify that $\NB_\bpi$ computes a value in $\opt(\bs,\balpha)$. Applying the Bayes rule we get
   \alequn{ &\pr_{\by, Q|\balpha} \lb y_i = \ell \cond Q_i = \bs \rb \\
   &= \frac{\pr_{\by, Q|\balpha} \lb Q_i = \bs  \cond y_i = \ell \rb \cdot \pr_{\by, Q|\balpha} [ y_i = \ell]}{\pr_{\by, Q|\balpha} [ Q_i = \bs ]}.}

   Now, the denominator is independent of $\ell$ and thus does not affect the definition of $\opt(\bs,\balpha)$. The distribution $Q$ is uniform over all possible queries, and thus for every pair of vectors $\by, \by'$, $$\pr_Q[\acc_\by(Q) = \balpha] = \pr_Q[\acc_{\by'}(Q) = \balpha] .$$ Therefore the marginal of distribution $\by \sim \bpi,Q \sim \mu_m^{n\times k} \cond \acc_\by(Q) = \balpha$ over label vectors is not affected by conditioning. That is, it is equal to $\bpi$. Therefore $$\pr_{\by, Q|\balpha} [ y_i = \ell] = \pr_{\by, Q|\balpha} [ y_i = \ell] = \pi_i(\ell) .$$

   By Lemma \ref{lem:confidence} we obtain that
$$ \pr_{\by, Q|\balpha}  \lb Q_i = \bs  \cond y_i = \ell \rb = \conf(\ell,\bs,\balpha).$$
  This implies that maximizing $\pr_{\by, Q|\balpha} \lb y_i = \ell \cond Q_i = \bs \rb$ is equivalent to maximizing $\pi_i(\ell) \cdot \conf(\ell,\bs,\balpha)$. Hence $\NB_\bpi$ achieves the optimal expected accuracy.

  To obtain the second part of the claim we note that the expected accuracy of $\NB$ does not depend on the target labels $\by$ (the queries and the decision algorithm are invariant to an arbitrary permutation of labels at any point). That is for any $\by,\by' \in [m]^n$, $$\E_{\br=\NB^{\cO(\by)}}[\acc_\by(\br)] =\E_{\br=\NB^{\cO(\by')}}[\acc_{\by'}(\br)] .$$
  This means that the worst case accuracy of $\NB$ is the same as its average-case accuracy for labels drawn from the uniform distribution $\mu_m^n$. In addition, $\NB_{\mu_m^n}$ is equivalent to $\NB$. Therefore,
  $$\acc(\A) \leq \acc(\A,\mu_m^n) \leq \acc(\NB_{\mu_m^n},\mu_m^n) = \acc(\NB).$$
\end{proof}

We now provide the analysis of a lower bound on the bias achieved by $\NB$. Our analysis will apply to a simpler algorithm that effectively computes the plurality label among those for which accuracy is sufficiently high (larger than the mean plus one standard deviation). Further, to simplify the analysis, we take the number of queries to be a draw from the Poisson distribution. This Poissonization step ensures that the counts of the times each label occurs are independent. The optimality of the $\NB$ attack implies that the bias achieved by $\NB$ is at least as large as that of this simpler attack.

The key to our proof of Theorem \ref{thm:nb-analysis} is the following lemma about biased and Poissonized multinomial random variables \iffull that we prove in Appendix \ref{sec:poisson}\fi.
\begin{lem}
\label{lem:plurality}
For $\gamma \geq 0$ let $\rho_\gamma$ denote the categorical distribution $\rho_\gamma$ over $[m]$ such that $\pr_{s \sim \rho_\gamma}[s = m] = \fr{m} + \gamma$ and for all $y \neq m$, $\pr_{s \sim \rho_\gamma}[s = y] = \fr{m} - \frac{\gamma}{m-1}$. For an integer $t$, let $\mnom(t,\rho_\gamma)$ be the multinomial distribution over counts corresponding to $t$ independent draws from $\rho_\gamma$. For a vector of counts $\bc$, let $\argmax(\bc)$ denote the index of the largest value in $\bc$. If several values achieve the maximum then one of the indices is picked randomly.
Then for $\lambda \geq 2 m \ln (4m)$ and $\gamma \leq \frac{1}{8\sqrt{\lambda m}}$,
$$
\pr_{t \sim \Pois(\lambda), \bc \sim \mnom(t,\rho_\gamma)} [\argmax(\bc) = m]
    \geq \fr{m} + \Omega \lp \frac{\gamma \sqrt{\lambda}}{\sqrt{m}} \rp
$$
\end{lem}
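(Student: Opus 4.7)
\emph{Proof plan.} The plan is to decouple the coordinates of $\bc$ via Poissonization and then reduce the bound to a Gaussian-type sensitivity calculation valid in the stated regime.

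\emph{Step 1 (Poissonization).} The standard Poissonization identity says that when $t\sim\Pois(\lambda)$ and $\bc\sim\mnom(t,\rho_\gamma)$, the coordinates $c_1,\ldots,c_m$ are mutually independent Poissons, with $c_m\sim\Pois(\mu+\Delta)$ and $c_i\sim\Pois(\mu-\Delta/(m-1))$ for $i<m$, where $\mu:=\lambda/m$ and $\Delta:=\lambda\gamma$. Writing $\sigma:=\sqrt{\mu}$, the hypotheses give $\mu\geq 2\ln(4m)$ (ample mass for a Gaussian approximation of each marginal) and $\Delta\leq\sigma/8$ (the mean shift is a small fraction of a standard deviation).

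\emph{Step 2 (Sensitivity via Gaussian approximation).} Let $f(\Delta):=\pr[\argmax(\bc)=m]$ under the random tie-breaking convention; by symmetry $f(0)=1/m$, and the goal becomes $f(\Delta)-f(0)\geq c\,\Delta/(\sigma m)=c\,\gamma\sqrt{\lambda/m}$. The Poisson score identity $\partial_\theta \E[F(c)]=\E[F(c+1)-F(c)]$ (for $c\sim\Pois(\theta)$) expresses $f'(\Delta)$ as an expectation over discrete events of the form ``$c_m$ is equal to, or one below, the max of the others,'' each weighted by a random tie-size factor. Using a local CLT (of the same scaling as Lemma~\ref{lem:berry-esseen}, valid because $\mu\gtrsim \ln m$), each such expectation reduces to its Gaussian analogue $\sigma^{-1}\E\bigl[\phi((M-\mu)/\sigma)\bigr]$, where $M=\max_{i<m}Z_i$ and $Z_i\sim\mathcal N(\mu,\sigma^2)$ are i.i.d. This equals
\[
 (m-1)\,\sigma^{-1}\int \phi(u)^2\,\Phi(u)^{m-2}\,du,
\]
which, after substituting $v=\Phi(u)$ and then $v=1-w/m$, evaluates to $\Theta(1/(\sigma m))$. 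So $f'(0)=\Theta(1/(\sigma m))$; integrating over $[0,\Delta]$ together with a second-derivative bound (using $\Delta\leq\sigma/8$) yields the target lower bound.

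\emph{Main obstacle.} The crux is the integral estimate $(m-1)\int\phi(u)^2\Phi(u)^{m-2}\,du=\Theta(1/m)$. A naive saddle-point at the mode $u\approx\sqrt{2\ln m}$ mis-identifies the order because $\Phi(u)^{m-2}$ has not meaningfully decayed at the mode and the effective mass is spread on a comparatively wide window in $u$; the change of variables $v=\Phi(u)$ relocates the mass to a window of width $1/m$ near $v=1$, where $\phi(\Phi^{-1}(v))$ scales linearly in $1-v$ and the estimate becomes transparent. A secondary difficulty is matching the discrete Poisson pmf to its continuous Gaussian analogue with error strictly dominated by the target bias; this is precisely what the parameter ties in the hypothesis are calibrated for ($\mu\geq 2\ln(4m)$ for local-CLT accuracy and $\Delta\leq\sigma/8$ so that linear-order sensitivity dominates higher-order terms and discrete tie-contributions), and it dictates why the restrictive regime on $\gamma$ and $\lambda$ appears in the statement.
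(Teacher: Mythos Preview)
Your approach via sensitivity and Gaussian approximation differs from the paper's, and has a genuine gap at the local-CLT step. You evaluate the Gaussian proxy at the typical location of $M=\max_{i<m}c_i$, which sits about $\sqrt{2\ln m}$ standard deviations above $\mu$. But the hypothesis only gives $\mu=\lambda/m\geq 2\ln(4m)$; at a point $x\approx\sigma\sqrt{2\ln m}$ above the mean the cubic term in the Poisson log-pmf expansion is $x^3/(6\mu^2)=\Theta\bigl((\ln m)^{3/2}/\sqrt{\mu}\bigr)=\Theta(\ln m)$, so the multiplicative discrepancy between the Poisson pmf and $\sigma^{-1}\phi((\cdot-\mu)/\sigma)$ is polynomially large in $m$. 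Likewise, a Berry--Esseen CDF error of order $1/\sqrt{\mu}=O(1/\sqrt{\ln m})$ swamps the tail probability $\Theta(1/m)$ needed to pin down the law of $M$. Your assertion that ``$\mu\gtrsim\ln m$'' makes the local CLT adequate here is not justified; something like $\mu\gg(\ln m)^3$ would be needed, far stronger than the stated assumption. The integration step (``second-derivative bound'') and the full accounting of $f'(\Delta)$---which has contributions from all $m$ Poisson means, not just $c_m$'s---are also left open.

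The paper avoids Gaussian approximation altogether. After the same Poissonization it couples $c_m=c'_m+d_m$ with $c'_m\sim\Pois(\lambda')$ having the common law of $c_1,\dots,c_{m-1}$ and $d_m\sim\Pois(\gamma\lambda m/(m-1))$ independent. By symmetry $\Pr[\argmax(\bc')=m]=1/m$, and the advantage is at least $\tfrac12\Pr[c'_m\in[z-d_m+1,z]]$ where $z=\max_{i<m}c_i$. This is lower-bounded using a purely Poisson fact---$\Pr[\Pois(\lambda')=t]\geq \Pr[\Pois(\lambda')\geq t]/(e\sqrt{t})$---together with Chernoff-type quantile bounds on $z$. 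No normal comparison enters, which is exactly why the weak assumption $\lambda\geq 2m\ln(4m)$ suffices.
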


Given this lemma the rest of the analysis follows quite easily.
\begin{thm}
\label{thm:nb-analysis}
For any $m \geq 2$, $n \geq k \geq k_{\min} = O(\ln n+ m \ln m)$, we have that
$$\acc(\NB) = \fr{m} + \Omega\lp\frac{\sqrt{k}}{m\sqrt{n}}\rp .$$
\end{thm}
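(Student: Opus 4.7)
The plan is to invoke the optimality of $\NB$ from Theorem~\ref{thm:nb-optimal} and lower bound the accuracy of a simpler, more tractable point-wise attack $\A'$. I define $\A'$ as follows: draw a number of queries $k' \sim \Pois(k/2)$ (padding with dummy queries when $k' < k$ and discarding the excess when $k' > k$; the latter event has probability $e^{-\Omega(k)}$, so it contributes negligibly to accuracy), submit them uniformly at random, filter to the set of queries whose accuracy is at least $\tau \doteq 1/m + \sigma$ where $\sigma \doteq \sqrt{(m-1)/(nm^2)}$ is the standard deviation of $\acc_\by(\bq)$ under uniform $\bq$, and for each point $i$ output the plurality of $\{Q_{i,j} : \alpha_j \geq \tau\}$ with ties broken uniformly at random. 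Since $\A'$ is point-wise by construction, Theorem~\ref{thm:nb-optimal} together with the label symmetry of $\NB$ yields $\acc(\NB) \geq \acc(\A')$ up to the negligible loss from the truncation event.

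Fix a point $i$ and, by label symmetry of $\A'$, assume without loss of generality that $y_i = m$. The $\alpha_j$'s are i.i.d.\ with distribution $\Bin(n, 1/m)/n$, so by Berry--Esseen (Lemma~\ref{lem:berry-esseen}) and the fact that the hypothesis $k \geq k_{\min} = \Omega(m \ln m)$ together with $n \geq k$ makes the ratio $n/m$ sufficiently large, the probability $p \doteq \Pr[\alpha_j \geq \tau]$ is bounded below by a positive absolute constant. By Poisson thinning the number of surviving queries is then $\Pois(\lambda)$ with $\lambda = pk/2 = \Theta(k)$. Lemma~\ref{lem:confidence} shows that conditional on $\alpha_j$, $Q_{i,j}$ equals $m$ with probability $\alpha_j$ and each $\ell \neq m$ with probability $(1-\alpha_j)/(m-1)$; in the notation of Lemma~\ref{lem:plurality} this is exactly $Q_{i,j} \mid \alpha_j \sim \rho_{\alpha_j - 1/m}$. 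Thus the values at surviving queries are i.i.d.\ from a mixture of $\rho_{\gamma_j}$ distributions with $\gamma_j \geq \sigma$ almost surely.

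The main technical step, and what I expect to be the principal obstacle, is reducing this variable-bias mixture to the fixed-bias setting assumed by Lemma~\ref{lem:plurality}. I would handle it by a direct coupling argument: a draw from $\rho_{\gamma_j}$ with $\gamma_j \geq \sigma$ can be transformed into a draw from $\rho_\sigma$ by, whenever the sample equals $m$, resampling it uniformly over $[m]\setminus\{m\}$ with probability $(\gamma_j - \sigma)/(1/m + \gamma_j)$. A routine calculation confirms that the resulting marginal is $\rho_\sigma$, and under this coupling the count of label $m$ only decreases while every competing count only increases, so the probability that the point-$i$ plurality equals $m$ under the true mixture is at least its counterpart under i.i.d.\ $\rho_\sigma$ draws. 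Applying Lemma~\ref{lem:plurality} with $\lambda = \Theta(k)$ and $\gamma = \sigma = \Theta(1/\sqrt{nm})$---and observing that the hypotheses $\lambda \geq 2m \ln(4m)$ and $\gamma \leq 1/(8\sqrt{\lambda m})$ reduce respectively to $k \geq \Omega(m \ln m)$ (covered by $k_{\min}$) and $k \leq n$ (given)---yields per-point accuracy at least $1/m + \Omega(\sigma \sqrt{\lambda/m}) = 1/m + \Omega(\sqrt{k}/(m\sqrt{n}))$. Averaging over $i \in [n]$ and chaining through $\acc(\NB) \geq \acc(\A')$ then completes the proof; everything aside from the coupling---the constant filter-passage probability from Berry--Esseen, the Poisson thinning, and the plug-in to Lemma~\ref{lem:plurality}---is essentially mechanical once $\A'$ is set up.
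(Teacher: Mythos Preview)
Your approach is the paper's: reduce via the optimality of $\NB$ (Theorem~\ref{thm:nb-optimal}) to a simpler threshold-and-plurality attack, Poissonize, reduce the variable per-query biases to a single fixed $\gamma$ by monotonicity (the paper asserts this in one line, ``larger bias can only increase the success probability''; your explicit coupling is the same fact spelled out), and plug into Lemma~\ref{lem:plurality}. The only organizational difference is where the Poissonization sits---you Poissonize the total query count and invoke Poisson thinning, whereas the paper uses exactly $k$ queries, filters, and then subsamples $v\sim\Pois(k/8)$ of the survivors (handling $v$ exceeding the survivor count as a negligible bad event, exactly as you handle $k'>k$).

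One small slip to fix: with your choices $\gamma=\sigma$ and $\lambda=pk/2$ (where $p\approx 1-\Phi(1)\approx 0.16$), the hypothesis $\gamma\le 1/(8\sqrt{\lambda m})$ does not reduce to $k\le n$; it asks for $\lambda\lesssim n/64$, i.e.\ roughly $k\lesssim n/5$. The paper takes the threshold at $\sigma/3$ (so $\gamma=\sigma/3$) and $\lambda=k/8$ precisely so that this condition is implied by $k\le n$; the same tweak---shrinking either the threshold or the Poisson parameter by a constant---patches your version without changing anything conceptual.
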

\begin{proof}
Let $\gamma = \frac{\sqrt{1-1/m}}{3\sqrt{mn}}$ and we consider a point-wise attack algorithm $\B$ that given a vector $\bs \in [m]^k$  of query values at a point and a vector $\balpha$ of accuracies computes the set of indices $J \subseteq [k]$, where $\alpha_j \geq \fr{m} +\gamma$. We denote $t =|J|$. The algorithm then samples $v$ from $\Pois(\lambda)$ for $\lambda=k/8$. If $v \leq t$ then let $J'$ denote the first $v$ elements in $J$, otherwise we let $J'=J$. $\B$ outputs the plurality label of labels in $\bs_{J'} = (s_j)_{j\in J'}$.

To analyze the algorithm, we denote the distribution over $\bs$, conditioned on the accuracy vector being $\balpha$ and correct label of the point being $y$ by $\rho(\balpha,y)$. Our goal is to lower bound the success probability of $\B$ $$\pr_{\bs \sim \rho(\balpha,y)}[\B(\bs,\balpha) = y] .$$

Lemma \ref{lem:confidence} implies that elements of $\bs$ are independent and for every $j\in[k]$, $s_j$ is equal to $y$ with probability $\alpha_j$ and $\frac{1-\alpha_j}{m-1}$, otherwise. Therefore for every $j \in J$, $s_j$ is biased by at least $\gamma$ towards the correct label $y$. We will further assume that $s_j$ is biased by exactly $\gamma$ since larger bias can only increase the success probability of $\B$.

Now let $\delta = \pr[v> t]$. The distribution of $|J'|$ is $\delta$ close in total variation distance to $\Pois(\lambda)$. By Lemma \ref{lem:plurality}, this means that
\equ{\pr[\pl(\bs_{J'}) = y] \geq \fr{m} + \Omega\lp \frac{\sqrt{k}\gamma}{\sqrt{m}}\rp - \delta = \fr{m} + \Omega\lp \frac{\sqrt{k}}{m\sqrt{n}}\rp - \delta,\label{eq:simple-adv-bound}} where we used the assumptions $k \geq k_{\min}$ and $n \geq k$ to ensure that the conditions $\lambda \geq 2 m \ln (4m)$ and $\gamma \leq \frac{1}{8\sqrt{\lambda m}}$ hold.

Hence to obtain our result it remains to estimate $\delta$. We view $t$ as jointly distributed with $\balpha$. Let $\phi$ denote the distribution of $\balpha$ for $Q \sim \mu_n^{n \times k}$ and any vector $\by$.
For every $j \in [k]$, $\alpha_j$ is distributed according to the binomial distribution $\Bin(n,1/m)$. By using the Berry-Esseen theorem (Lemma \ref{lem:berry-esseen}), we obtain that $$\pr\lb \alpha_j  \geq \fr{m} + \frac{\sigma}{3} \rb \geq \pr\lb \zeta \geq \frac{\sigma}{3}\rb - O\lp\sqrt{\frac{m}{n}}\rp ,$$ where $\sigma^2 = \frac{1-1/m}{mn}$ and $\zeta$ is normally distributed with mean 0 and variance $\sigma^2$. In particular, for sufficiently large $n$,
$$\pr\lb \alpha_j  \geq \fr{m} + \gamma \rb \geq 1/3 .$$

Now by Chernoff bound (Lemma \ref{lem:chernoff}), we obtain that for sufficiently large $k$, $$\pr\lb t \leq \frac{k}{4} \rb \leq e^{-k/96}.$$
In addition, by the concentration of $\Pois(k/8)$ (Lemma \ref{lem:poisson-concetrate}) we obtain that
$$\pr \lb v \geq \frac{k}{4}\rb \leq e^{-k/32} .$$

Therefore, by the union bound, $\delta \leq e^{-k/96} + e^{-k/32}$ and thus for $k \geq k_{\min}$ we will have that $\delta = o(1/n) = o(\sqrt{k}/(m\sqrt{n}))$. Plugging this into eq.~\eqref{eq:simple-adv-bound} we obtain the claim.
\end{proof}

\subsection{Reconstruction-based attack}
Our second attack relies on a probabilistic argument, showing that any dataset's label vector is, with high probability, uniquely identified by the accuracies of $O\lp\max\left\{ \frac{n\ln m}{\ln(n/m)}, m\ln(nm)\right\}\rp$ uniformly random queries. This argument was first used for the binary label case by \citet{ErdosR63} and generalized to arbitrary $m$ by \citet{Chvatal83}. We further generalize it to allow identification when the accuracy values are known only up to a fixed shift. This is needed as we apply this algorithm to a subset of labels such that the accuracy on the remaining labels is unknown. Formally, the unique identification property follows.
\begin{thm}
\label{thm:indentify}
 Say that a query matrix $Q \in [m]^{n\times k}$ recovers any label vector from shifted accuracies if there do not exist distinct $\by,\by'\in[m]^n$ and shift $\beta \in \R$ such that
 $$ \acc_\by(Q) = \acc_{\by'}(Q) + \beta \cdot (1,1,\ldots,1). $$
 For $m\geq 3$ and $k = \max\left\{ \frac{5n\ln m}{\ln(n/4m)}, 20 m\ln(nm)\right\}$, with probability at least $1/2$ over the choice of random $Q \sim \mu_m^{n\times k}$, $Q$ recovers any label vector from shifted accuracies.
\end{thm}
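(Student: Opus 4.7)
The plan is to extend the union bound argument of Erd\H{o}s--R\'enyi and Chv\'atal in two ways: to handle the additional shift parameter $\beta$, and to prune the union via a structural reduction that removes the prohibitive $m^{n-d}$ factor from agreements outside the disagreement set.

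The key observation is that if $\by \neq \by'$ differ on a set $I \subseteq [n]$ with $|I| = d$, then
\[
\acc_\by(Q^j) - \acc_{\by'}(Q^j) = \frac{1}{n}\sum_{i \in I} X_{i,j}, \qquad X_{i,j} := \ind[Q_{i,j}=y_i] - \ind[Q_{i,j}=y'_i],
\]
a quantity depending only on $I$, the ordered label pairs $(y_i, y'_i)_{i \in I}$, and the submatrix $Q|_{I \times [k]}$. Hence the failure event is equivalent to the existence of a structural triple $(I, (y_i, y'_i)_{i \in I}, \beta)$ satisfying $\sum_{i \in I} X_{i,j} = n\beta$ for every $j$, and the number of such triples with $|I| = d$ is $\binom{n}{d}[m(m-1)]^d(2d+1)$, saving the critical $m^{n-d}$ factor over enumerating full label pairs. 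Each $X_{i,j}$ is independent, taking values $\pm 1$ with probability $1/m$ each and $0$ otherwise. I would then bound the single-query probability $p(d,\beta) := \pr[\sum_{i \in I} X_{i,j} = n\beta]$ in two regimes. For $d \geq 4m$, anti-concentration for the lazy $\pm 1$-walk (condition on the number $T$ of nonzero steps, which concentrates near $2d/m$, then apply $\pr[\Bin(T,1/2)=s] = O(1/\sqrt{T})$) yields $p(d,\beta) \leq C\sqrt{m/d}$ uniformly in $\beta$. For $d < 4m$ and $\beta = 0$, direct calculation gives $p(d,0) \leq 1 - \Omega(d/m)$, the dominant contribution being $(1-2/m)^d = \pr[\text{all } X_i = 0]$; for $\beta \neq 0$, the constraint forces at least one nonzero $X_i$, so $p(d,\beta) \leq 2d/m$.

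Since queries are independent, $\pr[Z \text{ is constant across } j] \leq \sum_\beta p(d,\beta)^k$, and the union bound across structural triples gives
\[
\pr_Q[\text{failure}] \leq \sum_{d=1}^{n} \binom{n}{d}[m(m-1)]^d \sum_\beta p(d,\beta)^k.
\]
For $d < 4m$ the $\beta = 0$ term dominates, contributing at most $(enm^2/d)^d e^{-\Omega(dk/m)}$; choosing $k \geq 20m\ln(nm)$ makes each term at most $(nm)^{-\Omega(d)}$, and the sum over $d$ is $o(1)$. For $d \geq 4m$ the bound becomes $\binom{n}{d}m^{2d}(2d+1)(m/d)^{k/2}$, and taking logs we need $\frac{k}{2}\ln(d/m) \geq d\ln(en/d) + 2d\ln m + O(\log n)$; the right-hand side is maximized near $d = n$, giving the requirement $k \gtrsim n\ln m/\ln(n/m)$, which is absorbed by $k \geq 5n\ln m/\ln(n/4m)$ (the $4$ in the denominator handles the $d\ln(en/d)$ term and the boundary $d \approx 4m$ contribution).

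The principal obstacle is the small-$d$, $\beta = 0$ regime, where the per-query probability sits close to $1$. Here the structural reduction is essential: naively unioning over ordered label pairs $(\by,\by')$ would introduce a spurious $m^{n-d}$ factor and blow up the requirement to $k = \Omega(mn\log m)$. A secondary technicality is tracking the local CLT constant for the lazy walk in the large-$d$ regime; conditioning on the number of nonzero steps and applying a Chernoff lower-tail bound separates this cleanly.
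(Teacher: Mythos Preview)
Your approach is essentially the paper's: the structural reduction to disagreement triples $(I,(y_i,y'_i)_{i\in I},\beta)$, the lazy random walk representation $\sum_i X_{i,j}$ with $X_{i,j}\in\{-1,0,1\}$, the small-$d$/large-$d$ regime split, and the resulting union bound all match. Your counting $\binom{n}{d}[m(m-1)]^d(2d+1)$ is slightly tighter than the paper's $\binom{n}{w}m^{2w}$ together with its separate union over $n$ shifts, but this does not affect the final constants.

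There is, however, one genuine gap. Your bound $p(d,\beta)\le 2d/m$ for $\beta\neq 0$ is just the probability that at least one $X_i$ is nonzero, and it becomes vacuous once $d\ge m/2$. In the intermediate range $m/2\le d<4m$ you therefore have no usable bound on $\sum_{\beta\neq 0}p(d,\beta)^k$, and the assertion that ``the $\beta=0$ term dominates'' is unsupported there. The paper closes this gap with a single observation you omit: for $m\ge 3$ and every $d$, the pmf of $v=\sum_i X_i$ is maximized at $0$, so $p(d,\beta)\le p(d,0)$ for all shifts $\beta$. (Each $X_i$ has a log-concave pmf on $\{-1,0,1\}$ exactly when $(1-2/m)^2\ge (1/m)^2$, i.e.\ $m\ge 3$; log-concavity is preserved under convolution, and together with symmetry yields mode $0$.) With this in hand the sum over $\beta$ costs only a harmless factor of $2d+1$, and your small-$d$ analysis goes through. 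Note that this is also the \emph{only} place the hypothesis $m\ge 3$ enters the argument; for $m=2$ the walk has no zero step, $p(d,0)=0$ for odd $d$, and the inequality fails.

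A minor point: your bound $p(d,0)\le 1-\Omega(d/m)$ should be read as $p(d,0)\le \tfrac12+\tfrac12 e^{-2d/m}$, which is bounded away from $1$ by an absolute constant once $d\gtrsim m$; the ``$-\Omega(d/m)$'' notation is misleading when $d/m>1$. The paper makes this explicit by carving out a third regime $2m\le w<6m$ with $p_w\le e^{-1/3}$, whereas your two-regime split leaves the reader to unpack it.
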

\begin{proof}
Let $\by\neq\by'\in[m]^n$ be an arbitrary pair of indices. We describe the difference between $\by$ and $\by'$ using the set of indices where they differ $I=\Delta(\by,\by') = \{i \cond y_i \neq y'_i\}$ and the vectors restricted to this set $\by_I = (y_i)_{i\in I}$ and $\by'_I = (y'_i)_{i\in I}$.
It is easy to see from the definition that for any query $\bq$,
$$ \acc_{\by}(\bq) - \acc_{\by'}(\bq) = \fr{n} \sum_{i\in I} \lp \ind(y_i = q_i) -  \ind(y'_i = q_i) \rp .$$
In particular, the difference is fully determined by $I=\Delta(\by,\by')$, $\by_I$ and $\by'_I$.

This implies that for a randomly chosen $\bq \sim \mu_m^n$, $\acc_{\by}(\bq) - \acc_{\by'}(\bq)$ is distributed as a sum of $w=|I|$ independent random variables from distribution that is equal $1/n$ with probability $1/m$, $-1/n$ with probability $1/m$ and $0$ otherwise. Equivalently, this distribution can be seen as a sum $$\fr{n} \sum_{i \in [w]} b_i \sigma_i ,$$ where each $b_i$ is independent Bernoulli random variable with bias $2/m$ and each $\sigma_i$ is an independent Rademacher random variable. We use $v$ to denote the random variable
$$ v = \sum_{i \in [w]} b_i \sigma_i $$ and let $b$ denote the jointly distributed value $$b = \sum_{i \in [w]} b_i .$$

We first deal with shift $\beta = 0$. For this we will first need to upper-bound the probability $p_w \doteq \pr[v=0]$.
Note that conditioned on $b = j$, $v$ is distributed as sum of $j$ Rademacher random variables. Standard bounds on the central binomial coefficient imply that for even $j \geq 2$, $$\pr[v = 0 \cond b = j] \leq \frac{1}{\sqrt{j}}$$ and for odd $j$, $\pr[v = 0 \cond b = j] = 0$. In particular, for all $j \geq 1$, $\pr[v = 0 \cond b = j] \leq 1/2$.

This gives us that
\alequ{\pr[v = 0] & \leq \pr[b = 0] + \fr{2}\pr [b > 1] \nonumber
\\&= \fr{2} + \fr{2} \lp 1-\frac{2}{m} \rp^w \nonumber
\\& \leq \fr{2} + \fr{2} e^{-\frac{2w}{m}}.  \label{eq:small-w}}

Now using the multiplicative Chernoff bound we get that
$$\pr \lb b \leq \frac{w}{m} \rb \leq e^{-\frac{w}{6m}} .$$
This implies that
\alequ{\pr[v = 0] & \leq \pr\lb b < \frac{w}{m}\rb +  \sqrt{\frac{m}{w}} \pr\lb b \geq \frac{w}{m}\rb \nonumber \\&= e^{-\frac{w}{6m}} + \sqrt{\frac{m}{w}} . \label{eq:large-w}}

Given a matrix $Q$ of $k$ randomly and independently chosen queries we have
$$ \pr_{Q \sim \mu_m^{n\times k}}\lb \acc_\by(Q) = \acc_{\by'}(Q) \rb \leq p_w^k .$$
There are at most ${n \choose w} m^{2w}$ possible differences between a pair of vectors $\by,\by'$.
Therefore, by the union bound for every $w$, probability that there exists a pair of vectors $\by,\by'$ that differ in $w$ positions and for which the accuracies on all $k$ queries are identical is at most $$ {n \choose w} m^{2w} \cdot p_w^k .$$

If $1 \leq w < 2m$ then eq.~\eqref{eq:small-w} implies that $$p_w \leq \fr{2} + \fr{2} e^{-\frac{2w}{m}} \leq \fr{2} + \fr{2}\lp 1-\frac{w}{m}\rp \leq e^{-\frac{w}{2m}} $$ and our union bound is
\alequn{{n \choose w} m^{2w}& \cdot e^{-\frac{kw}{2m}} \\
&\leq\lp \frac{n e m^2}{w}\rp^w  \cdot e^{-\frac{kw}{2m}} \\
&\leq e^{w \ln(enm^2) - \frac{kw}{2m}} \\
& \leq \lp \frac{1}{4n^2} \rp^w \leq \frac{1}{2n^2}
,}
where we used the condition that $$k \geq 20m\ln(nm) \geq 2m\ln(2en^3m^2).$$

If $2m \leq w < 6m$ then eq.~\eqref{eq:small-w} implies that $$p_w \leq \fr{2} + \fr{2} e^{-\frac{2w}{m}} \leq \fr{2} + \fr{2} e^{-1} \leq e^{-1/3} $$ and our union bound is
\alequn{{n \choose w} m^{2w}& \cdot e^{-\frac{k}{3}} \\
&\leq e^{w \ln(enm/2) - \frac{k}{3}} .}
This bound is maximized for $w =6m$ giving  $e^{6m \ln(nm) - \frac{k}{3}}$.
Using the condition $$k \geq 20 m\ln(nm) \geq 18m\ln(enm/2) + 3\ln(2n^2)$$ we get
an upper bound of $\frac{1}{2n^2}$.

If $w \geq 6m$ then eq.~\eqref{eq:large-w} implies that $p_w \leq \sqrt{\frac{m}{4w}}$ and our union bound is
  $${n \choose w} m^{2w} \cdot \lp \sqrt{\frac{m}{4w}}\rp^k .$$ This bound is maximized for $w=n$, which gives an upper bound of $$ m^{2w} \cdot \lp\frac{n}{4m}\rp^{k/2} \leq \frac{1}{2n^2} ,$$ where we used the condition that $$k \geq \frac{5n\ln m}{\ln(n/4m)} \geq 2\frac{2n\ln(m) + \ln(2n^2)}{\ln(n/4m)} .$$

Now by using a union bound over all values of $w \in [n]$ we get that probability that there exist distinct $\by,\by'\in[m]^n$ such that
 $\acc_\by(Q) = \acc_{\by'}(Q)$ is at most $1/(2n)$. Now to deal with any other $\beta \in \{1/n,\ldots,1\}$ (we only need to treat positive $\beta$s since the definition is symmetric) we observe that for $m \geq 3$ and any $w$, $$\pr[v = n \beta ] \leq \pr[v = 0] =p_w .$$
 By the same argument this implies that probability that there exist distinct $\by,\by'\in[m]^n$ such that
 $$\acc_\by(Q) = \acc_{\by'}(Q) + \beta \cdot (1,1,\ldots,1)$$ is at most $1/(2n)$. Taking the union bound over all values of $\beta$ we obtain the claim.
\end{proof}

\begin{algorithm}[t]
\begin{algorithmic}
\INPUT Query access to a test set of $n$ points over $m$ labels, example budget $t \le n$.
\STATE Draw $k$ queries $R \in [m]^{t \times k}$ uniformly over $[m]^{t \times k}$.
\STATE Let $Q \in [m]^{n \times k}$ be the matrix that extends $R$ by appending $n-t$ rows of ones.
\STATE Submit queries $Q^1, \dots, Q^k$ and receive corresponding accuracies $\balpha = (\alpha_1, \dots, \alpha_k)$
\STATE Compute $\bz = (z_1, \dots, z_n) \in [m]^n$ as any vector satisfying $\acc_{\bz}(Q) = \balpha$.
\STATE Draw random predictions $z'_1, \dots, z'_{n-t}$ uniformly over $[m]^{n-t}$.
\OUTPUT Predictions $(z_1, \dots, z_t, z'_1, \dots, z'_{n-t})$
\end{algorithmic}
\caption{The reconstruction-based overfitting attack algorithm.}
\label{alg:recon}
\end{algorithm}

Naturally, if for all distinct labeling $\by,\by'$, $\acc_\by(Q) \neq \acc_{\by'}(Q)$ then we can recover the unknown labeling $\by$ simply by trying out all possible labeling $\by'$ and picking the one for which the $\acc_\by(Q) = \acc_\by'(Q)$. Thus an immediate implication of Thm.~\ref{thm:indentify} is that there exists a fixed set of $k=O\lp\max\left\{ \frac{n\ln m}{\ln(n/m)}, m\ln(nm)\right\}\rp$ queries that can be used to reconstruct the labels. In particular, this gives an attack algorithm with accuracy $1$. If $k$ is not sufficiently large for reconstructing the entire set of labels then it can be used to reconstruct a sufficiently small subset of the labels (and predict the rest randomly). Hence we obtain the following bound on achievable bias.
\begin{cor}
\label{cor:reconstruct}
For any $k\geq 40 m\ln(m)$, there exists an attack $\A$ with access to points such that $$\acc(\A) = \min\left\{1, \fr{m} + \Omega\lp \frac{ k \ln(k/m)}{n \ln m} \rp \right\} .$$
\end{cor}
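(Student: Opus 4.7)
The plan is to invoke Algorithm~\ref{alg:recon} with a carefully chosen subset size $t \le n$ and then directly apply Theorem~\ref{thm:indentify} to the $t \times k$ sub-matrix of random queries on the first $t$ coordinates. Concretely, I would fix $t$ to be the largest integer satisfying both reconstruction conditions from Theorem~\ref{thm:indentify} (with $n$ there replaced by $t$), namely $k \geq 5 t \ln m / \ln(t/4m)$ and $k \geq 20 m \ln(tm)$, subject to $t \leq n$. For any true label vector $\by$, the $n-t$ padded rows of ones contribute a fixed additive shift $\beta = \frac{1}{n} \sum_{i > t} \ind(y_i = 1)$ to every $\alpha_j$, so the observations on the first $t$ coordinates are shifted accuracies in the sense of Theorem~\ref{thm:indentify}. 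By that theorem, with probability at least $1/2$ over the random query matrix, the partial labeling $\by_{[1:t]}$ is the unique labeling of $t$ points consistent with the observed accuracies up to a constant shift, so the exhaustive search step of Algorithm~\ref{alg:recon} outputs $\by_{[1:t]}$ exactly.

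Conditional on this success event, the algorithm is correct on the first $t$ points and achieves expected accuracy $1/m$ on the remaining $n-t$ points via the uniform random guesses $z'_1,\dots,z'_{n-t}$. In the failure event (probability at most $1/2$), I discard all gain from the first $t$ coordinates and still retain the $1/m$ expected accuracy from the random tail. Combining:
\[
\E[\acc_\by(\A(\by))] \;\geq\; \tfrac{1}{2}\Bigl(\tfrac{t}{n} + \tfrac{n-t}{nm}\Bigr) + \tfrac{1}{2}\cdot\tfrac{n-t}{nm} \;=\; \tfrac{1}{m} + \Omega\bigl(\tfrac{t}{n}\bigr),
\]
for $m \geq 3$ (the $m=2$ case requires $t/n \cdot (1-2/m)$ to be positive, which is handled separately or, more cleanly, by absorbing into constants via the earlier $\NB$ attack). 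Since this holds for every $\by$, the same lower bound applies to $\acc(\A)$.

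The remaining step is to translate the choice of $t$ into the claimed bias expression. Solving $k = 5 t \ln m / \ln(t/4m)$ asymptotically gives $t = \Theta(k \ln(k/m)/\ln m)$: substituting this $t$ back makes $\ln(t/4m) = \Theta(\ln(k/m))$, so the implicit equation balances. I would then verify that the assumption $k \geq 40 m \ln m$ is enough to make the secondary constraint $k \geq 20 m \ln(tm)$ slack, since $\ln(tm) = O(\ln k + \ln m) = O(k/m)$ under our hypothesis. If $t$ determined this way exceeds $n$, we cap $t = n$, the reconstruction is global, and the accuracy is $1$; this yields the $\min\{1,\cdot\}$ form in the statement. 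Otherwise the bias is $\Omega(t/n) = \Omega(k \ln(k/m)/(n \ln m))$, as claimed.

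The main obstacle is the elementary but slightly delicate inversion of $k = 5 t \ln m / \ln(t/4m)$ to pin down $t$ up to constants, while simultaneously checking that both conditions of Theorem~\ref{thm:indentify} hold under only the hypothesis $k \geq 40 m \ln m$. Everything else is bookkeeping: the use of the random-tail predictions ensures an unconditional $1/m$ baseline so that the failure event of the reconstruction step costs only a constant factor, which gets absorbed into the $\Omega(\cdot)$.
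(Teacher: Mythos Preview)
Your proposal is correct and follows the same route as the paper: pick the largest $t$ for which Theorem~\ref{thm:indentify} applies to the first $t$ coordinates, pad the remaining $n-t$ coordinates with a constant label so that the observed accuracies are shifted accuracies on the sub-matrix $R$, reconstruct $\by_{[t]}$ by exhaustive search, and guess uniformly on the tail. The asymptotic inversion $t = \Theta\bigl(k\ln(k/m)/\ln m\bigr)$ and the remaining bookkeeping are handled the same way in both.

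The one place you diverge is in drawing the query matrix $R$ at random and conditioning on the success event of Theorem~\ref{thm:indentify}, which forces you to carry a failure event of probability at most $1/2$ through the accuracy calculation (and causes the harmless $m=2$ degeneracy you flag; note that Theorem~\ref{thm:indentify} already assumes $m\ge 3$). The paper instead observes that probability $\ge 1/2$ implies the \emph{existence} of a fixed matrix $R$ with the shifted-recovery property, and uses that deterministic $R$ in the attack. Reconstruction then succeeds for every $\by$, the expected accuracy is exactly $\tfrac{t}{n}+\tfrac{1}{m}\bigl(1-\tfrac{t}{n}\bigr)=\tfrac{1}{m}+\tfrac{t}{n}\cdot\tfrac{m-1}{m}$, and no failure branch is needed. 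Both versions give the same $\Omega(\cdot)$ bound; the derandomization just cleans up the constants.
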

\begin{proof}
We first let $t$ be the largest value for which Thm.~\ref{thm:indentify} guarantees existence of a set of queries of size $k$ that allows to fully recover $t$ labels from shifted accuracies. Using the bound from Thm.~\ref{thm:indentify} we get that $t = \Omega\lp \frac{ k \ln(k/m)}{ \ln m} \rp$. If $t \geq n$ then we recover the labels and output them. Otherwise, let $R \in [m]^{t \times k}$ be the set of queries that recovers $t$ labels and let $\by_{[t]}$ be the first $t$ values of $\by$. We extend $R$ to a set $Q$ of queries over $n$ labels by appending a fixed query $(1,1,\ldots,1)$ over the remaining $n-t$ coordinates.

Now to recover $\by_{[t]}$ we need to observe that, if there exists a vector $\bz \in [m]^t$ such that
$$t \cdot \acc_{\bz}(R) = n \cdot \acc_{\by}(Q) - (n-t) \beta(1,1,\ldots,1)$$
for some fixed value $\beta$, then $\by_{[t]} =\bz$. This follows from the fact that
$$t \cdot \acc_{\by_{[t]}}(R) = n \cdot \acc_{\by}(Q) - (n-t) \beta' (1,1,\ldots,1),$$
where $\beta'$ is the accuracy of all $1$ labels on the last $n-t$ coordinates of $\by$.
This implies that
$$\acc_{\bz}(R) = \acc_{\by_{[t]}}(R) + \lp \frac {n - t} t \rp (\beta'-\beta)(1,1,\ldots,1).$$
By the property of $R$ this implies that $\bz = \by_{[t]}$. Having found $\bz = \by_{[t]}$ we output a labeling that is equal to $\bz$ on the first $t$ labels and is random and uniform over the rest. The expected accuracy of this labeling is
$$\frac{t}{n} + \fr{m} \lp 1 - \frac{t}{n} \rp = \fr{m} + \frac{t}{n} \frac{m-1}{m}  = \fr{m} +  \Omega\lp \frac{ k \ln(k/m)}{n \ln m} \rp.$$
\end{proof}

The resulting reconstruction-based attack is summarized as Algorithm~\ref{alg:recon}.

\section{Experimental evaluation}

\begin{figure}[t]
    \centering
    \subfigure[Full test set]{
        \includegraphics[width=.48\columnwidth]{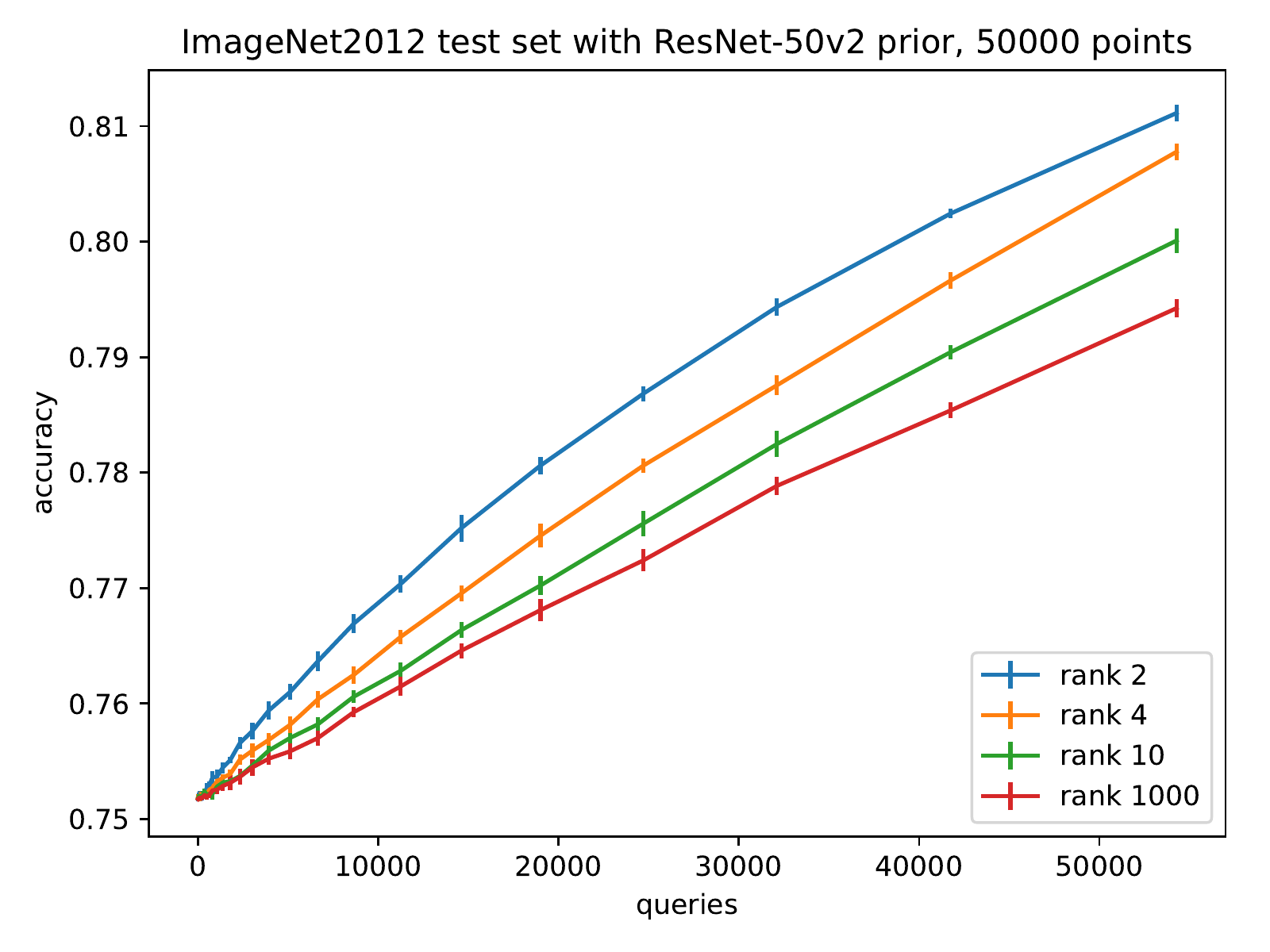}}
    \subfigure[10K least confident points]{
        \includegraphics[width=.48\columnwidth]{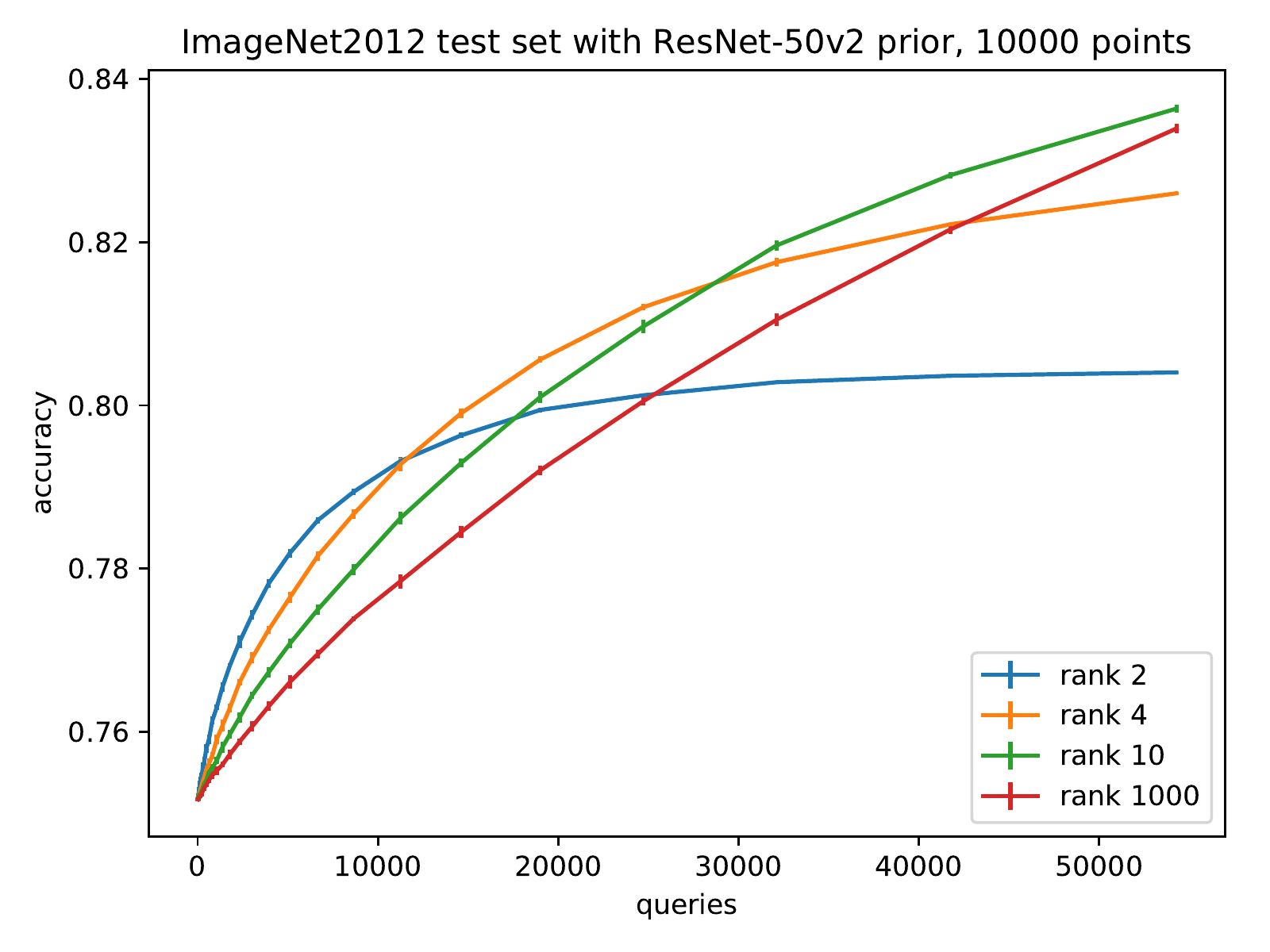}}
    \caption{Average accuracy (with standard deviation bars), over 10 attack trials, of the $\NB_\bpi$ attack against the ImageNet test set. The attacker's gains improve when the effective class count, as indicated by \emph{rank} (the value $R$ used in the top-$R$ heuristic) is reduced, illustrating the increasing vulnerability of the test set when classes are removed.}
    \label{fig:imgnet-all}
\end{figure}

\iffull
\iffull \else \section{Additional figures} \fi
\label{sec:app-figures}
\begin{figure*}[t]
    \centering
    \subfigure[Full test set]{
        \includegraphics[width=.8\textwidth]{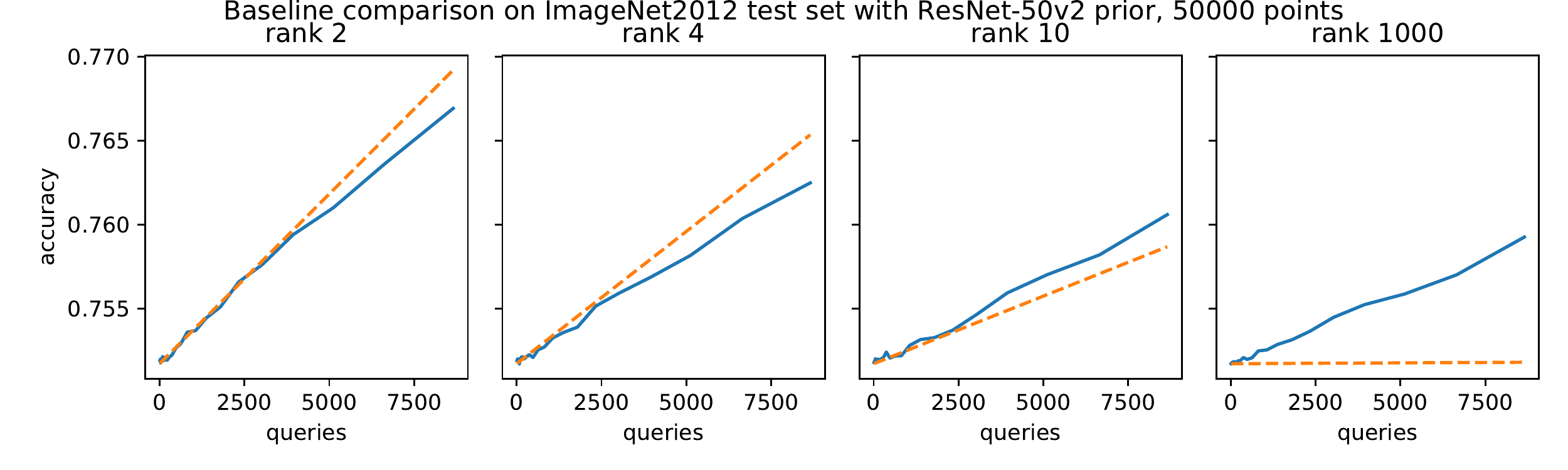}}
    \subfigure[10K least confident points]{
        \includegraphics[width=.8\textwidth]{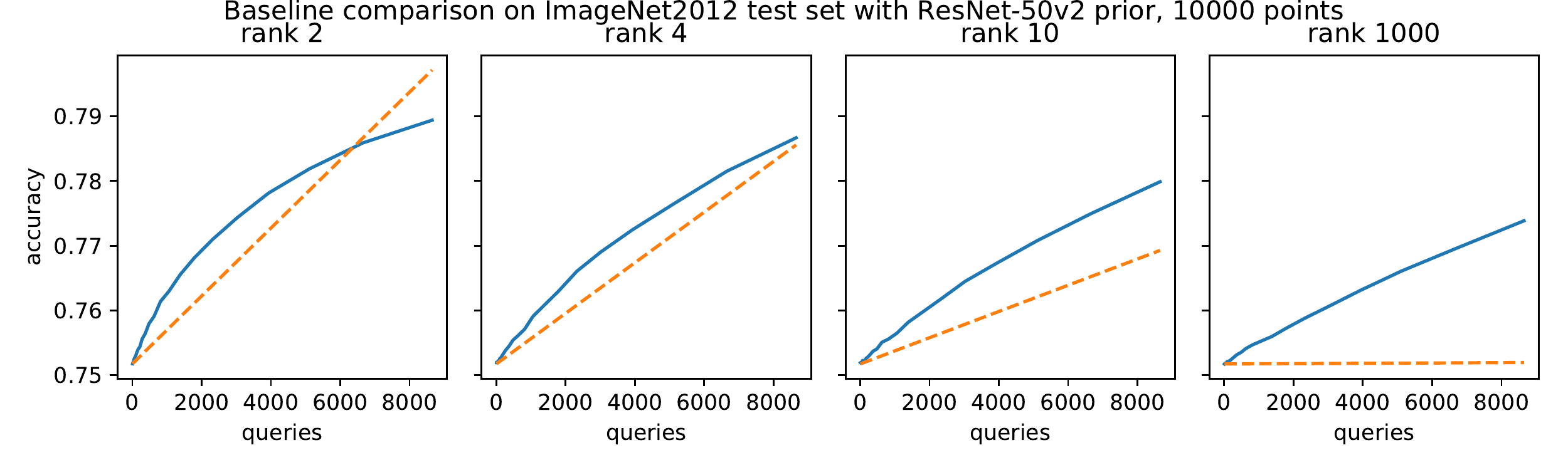}}
    \caption{The average accuracies depicted in Figure~\ref{fig:imgnet-all} in comparison with an analytical baseline: the expected performance of the linear scan attack at the same number of queries.}
    \label{fig:imgnet-baseline}
\end{figure*}

\begin{figure*}[t]
    \centering
    \subfigure[10K random points]{
        \includegraphics[width=.48\columnwidth]{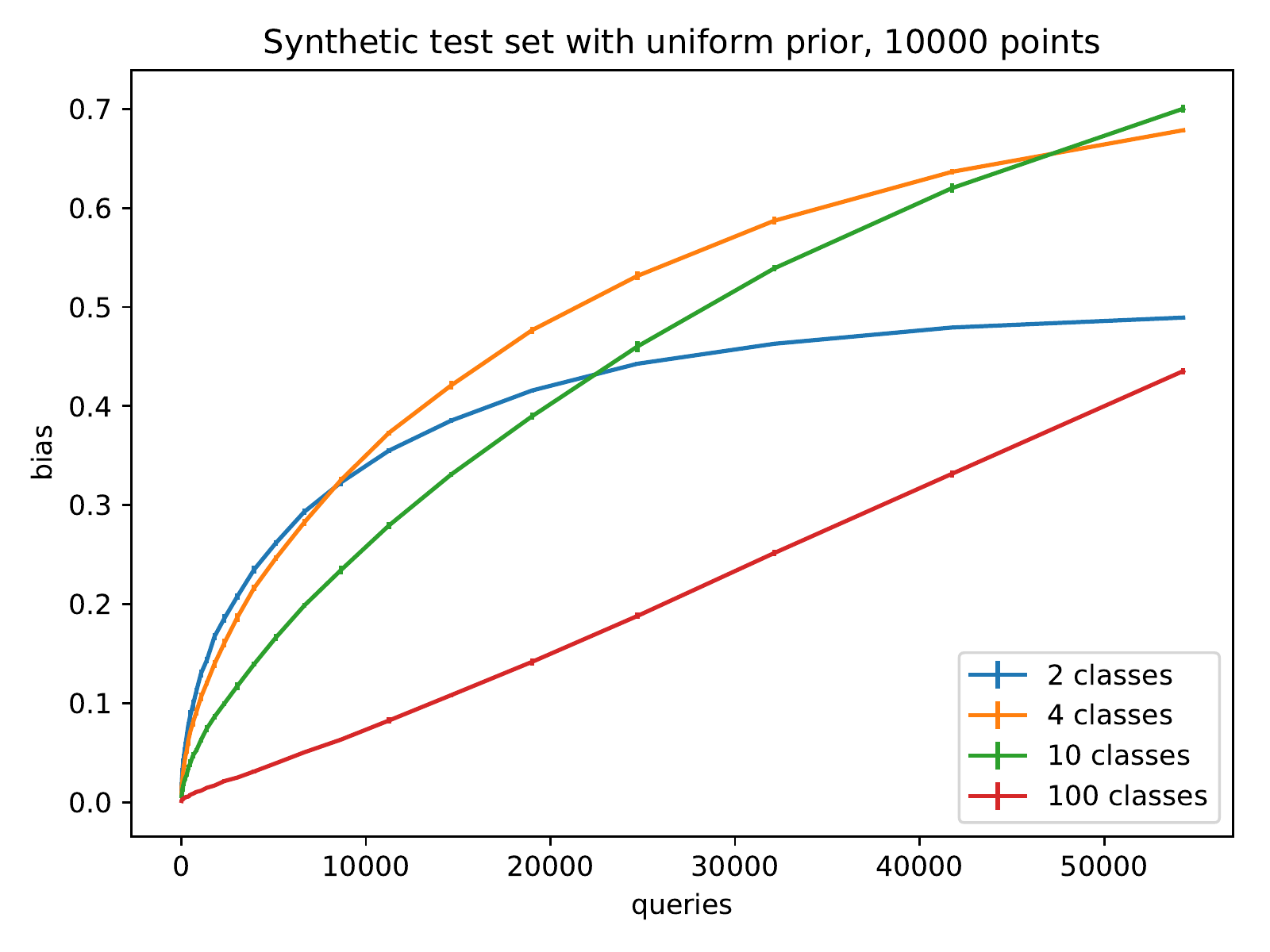}}
    \subfigure[50K random points]{
        \includegraphics[width=.48\columnwidth]{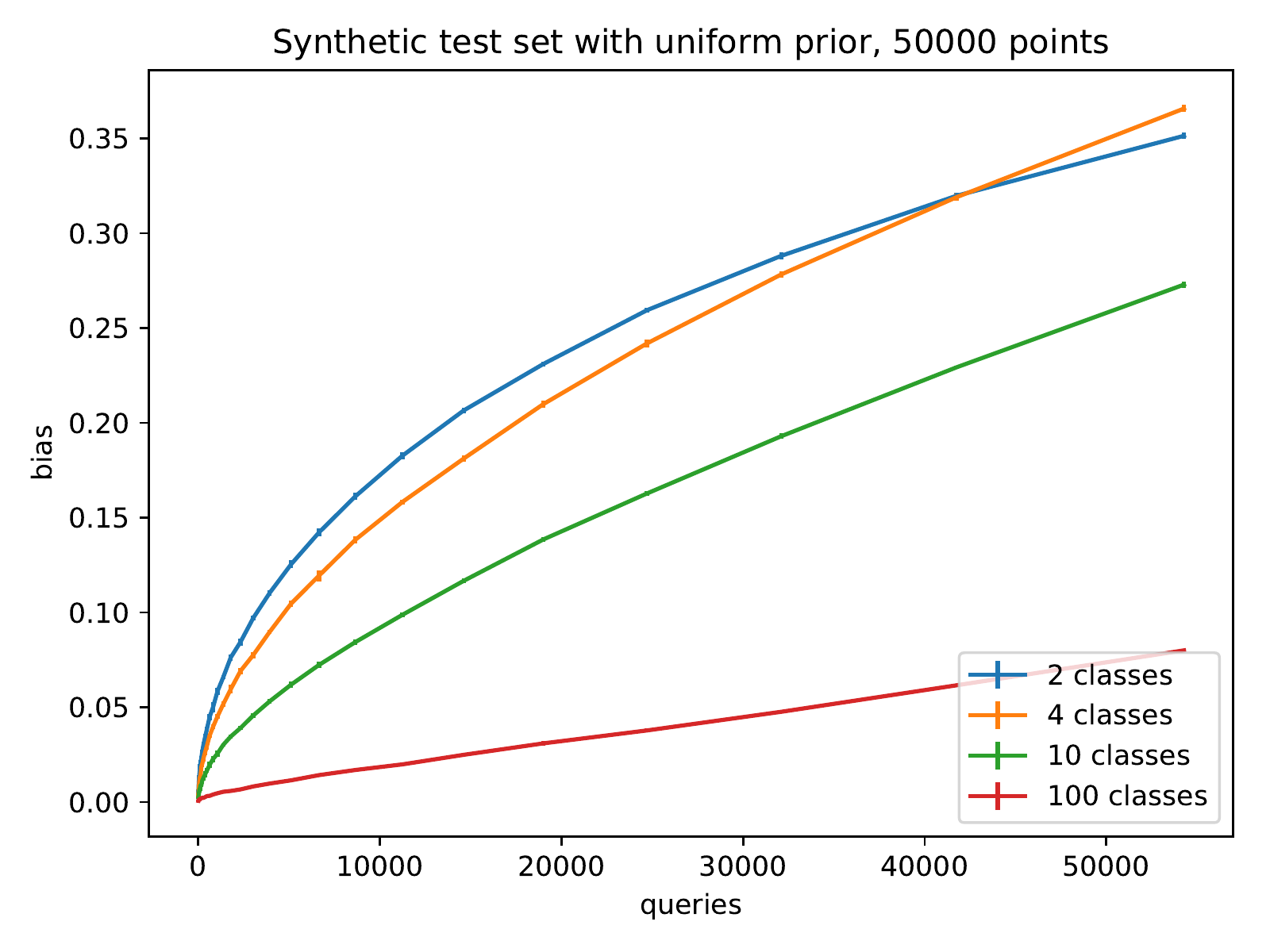}}
    \caption{Average bias (with negligible standard deviation bars), over 10 attack trials, of the $\NB$ attack against uniformly random test sets comprising different class counts. Note that the maximum achievable bias, which is $1-1/m$ for $m$ classes, differs for each curve.}
    \label{fig:synth}
\end{figure*}

\begin{figure*}[t]
    \includegraphics{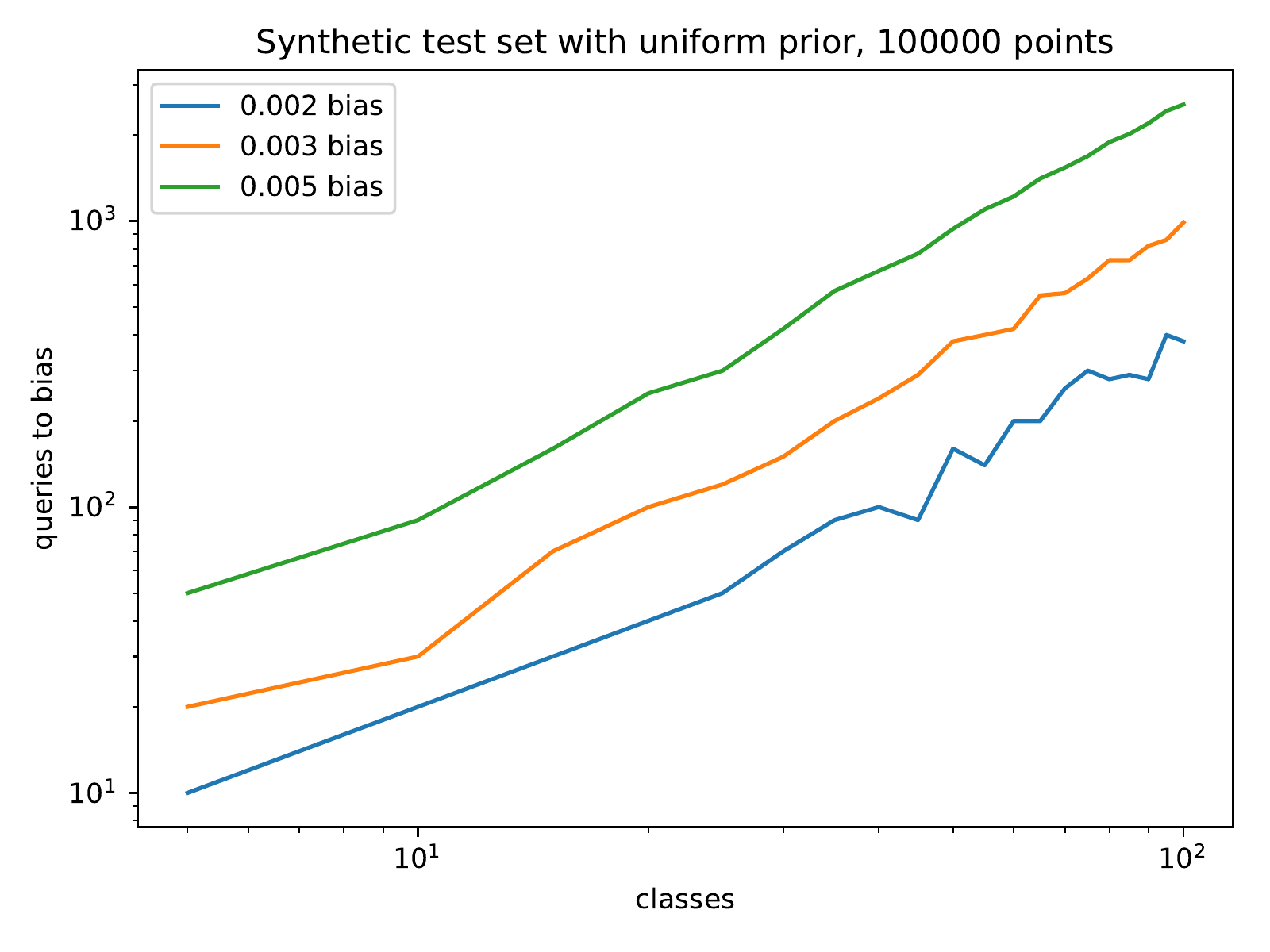}
    \caption{The number of queries at which a fixed advantage over $1/m$ is first attained, while the number of class labels $m$ varies, on a randomly generated test set of size 100{,}000.
    The endpoints of the curves form slopes (under the log-log axis scaling) of roughly $1.2$ (for the $0.002$ bias curve) and $1.3$ (for the other curves), suggesting that, to attain a fixed bias, the number of queries $k$ must indeed grow superlinearly with $m$, as supported by the bound in Theorem~\ref{thm:nb-analysis}.}
    \label{fig:synth-bias}
\end{figure*}

\begin{figure*}[t]
    \includegraphics{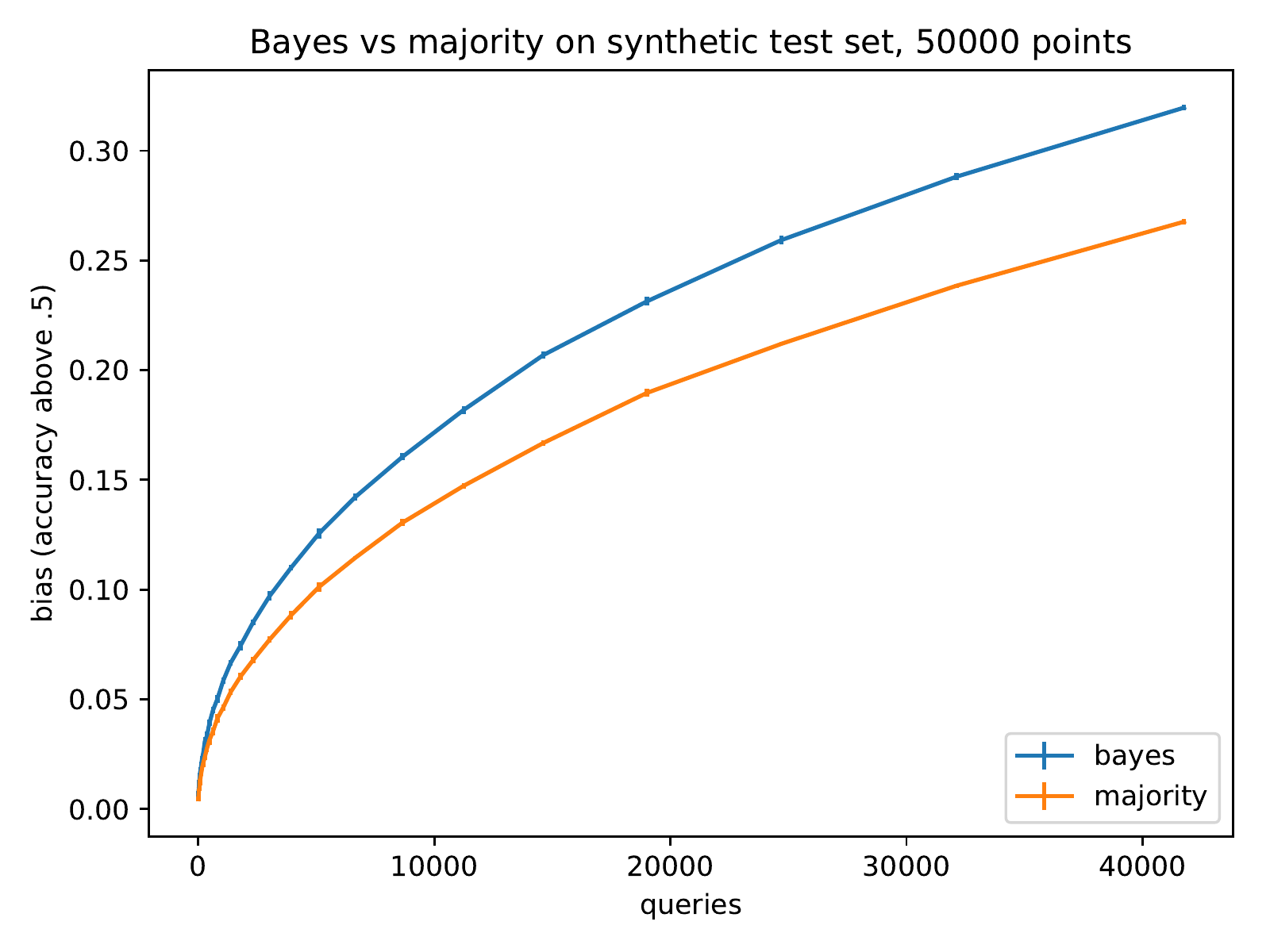}
    \caption{Average bias (with negligible standard deviation bars), over 10 attack trials, of two attacks---$\NB$ and the majority attack of \citet{BlumH15}---against uniformly random binary-labeled test sets.}
    \label{fig:comparison-to-majority}
\end{figure*}

\fi

This section presents a variety of experiments intended (i) to corroborate formal bounds, (ii) to provide a comparison to previous attack in the binary classification setting, and (iii) to explore the practical application of the $\NB$ attack from Section~\ref{sec:pointwise-attack}.

To visualize the attack's performance, we first simply simulate our attack directly on a test set of labels generated uniformly at random from $m$ classes. The attack assumes the uniform prior over the same labels.
Figure~\ref{fig:synth} \iffull \else in Appendix~\ref{sec:app-figures} \fi shows the observed advantage of the attack over the population error rate of $1/m$, across a range of query budgets, on test sets of size 10{,}000 and 50{,}000 respectively.\footnote{The number of points in these synthetic test sets is chosen to mirror the CIFAR-10 and ImageNet test sets.}

Figure~\ref{fig:synth-bias} \iffull \else in Appendix~\ref{sec:app-figures} \fi shows the number of queries at which a fixed advantage over $1/m$ is first attained, while the number of class labels $m$ varies, on a test set of size 100{,}000. To maintain a fixed value of the bound in Theorem~\ref{thm:nb-analysis}, an increase in the number of classes $m$ requires a quadratic increase in the number of queries $k$. The endpoints of the curves in Figure~\ref{fig:synth-bias}, on a log-log scale, form lines of slope greater than 1, supporting the conjecture that, to attain a fixed bias, the number of queries $k$ grows superlinearly with $m$.

In the binary classification setting, we compare to the majority-based attack proposed by \citet{BlumH15}, under the same synthetic dataset. Recall that the $\NB$ attack is based on a majority (more generally, plurality) weighted by the per-query accuracies.
The majority function is weighted only by $\pm 1$ values, as a means of ensuring non-negative correlation of each query with the test set labels. It does not consider low- and high-accuracy queries differently, where $\NB$ does.
Figure~\ref{fig:comparison-to-majority} \iffull \else in Appendix~\ref{sec:app-figures} \fi shows the observed relative advantage of the $\NB$ attack.
Note that simulating uniformly random binary labels places both attacks on similar starting grounds: the attacks otherwise differ in that $\NB_\bpi$ can incorporate a prior distribution $\bpi$ over class labels to its advantage.

Our remaining experiments aim to overfit to the ImageNet test set associated with the 2012 ILSVRC benchmark.
As a form of prior information, we incorporate the availability of a standard and (nearly) state of the art model.
Specifically, we train a ResNet-50v2 model over the ImageNet training set.
On the test set, this model achieves a prediction accuracy of $75.1\%$
and a top-$R$ accuracy of $85.3\%$, $91.0\%$, and $95.3\%$ for $R = 2, 4$, and $10$, respectively.

As is common practice in classification, the ResNet model is trained under the cross-entropy loss (a.k.a.\ the multiclass logistic loss). That is, it is trained to output scores (logits) that define a probability distribution over classes, from which it predicts the maximally-probable class label.
We use the model's logits---a 50{,}000 by 1000 array---as the sole source of side information for attack. All results are summarized in Figure~\ref{fig:imgnet-all}, several highlights of which follow.

First, we consider plugging the model's predictive distribution in as the prior $\bpi$ in the $\NB_\bpi$ attack, yielding modest gains, e.g.\ a $0.42\%$ accuracy boost after 5200 queries (averaged over 10 simulations of the attack).

Next, we observe that the model is highly confident about many of its predictions. Recalling the dependence on the test set size $n$ in our upper bound, we consider a simple heuristic for culling points. Namely, we select the 10K points for which the model is least confident of its prediction in order to attack a test set that is a fifth of the original size. This heuristic presents a trade-off: one reduces $n$ to 10K, but commits to leaving intact the errors made by the model on the 40K more confident points. Applying this heuristic improves gains further, e.g.\ to a $1.44\%$ accuracy boost after 5200 queries.

Finally, we consider another heuristic to reduce $m$, the effective number of classes in the attack, per this paper's focus on the multiple class count. Observing that the model has a high top-$R$ accuracy (i.e.\ recall at $R$) for relatively small values of $R$, it is straightforward to apply the $\NB_\bpi$ attack not to the original classes, but to selecting (pointwise) which of the model's top-$R$ predictions to take. This heuristic presents a trade-off as well: one reduces $m$ down to $R$, but commits to perform no better than the top-$R$ accuracy of the model, a quantity that increases with $R$. Applying this heuristic together with the previous improves the attacker's advantage further. For instance, at $R=2$, we observe a $3.0\%$ accuracy boost after 5200 queries.

To put these numbers in perspective, we compare to a straightforward analytical baseline in \iffull Figure~\ref{fig:imgnet-baseline}\else supplemental material\fi: the expected performance of the ``linear scan attack.''
Namely, this is an attack that begins with a random query vector and successively submits queries by modifying the label of one point at a time, discovering the label's true value whenever the observed test set accuracy increases.

\section*{Acknowledgements}
We thank Cl\'{e}ment Canonne for his suggestion to use Poissonization in the proof of Theorem~\ref{thm:nb-analysis}.
We thank Chiyuan Zhang for his crucial help in the setup of our ImageNet experiment.
We thank Kunal Talwar, Tomer Koren, and Yoram Singer for insightful discussion.

\iffull
\printbibliography
\else
\bibliographystyle{icml2019}
\bibliography{vf-allrefs-local,holdout}
\fi

\newpage
\appendix
\iffull
\section{Proof of Lemma \ref{lem:plurality}}
\label{sec:poisson}
We start with some definitions and properties of the Poisson distribution that we will need in the proof.

A Poisson random variable $V$, with parameter $\lambda$, is the random variable that for all non-negative integers $t$, satisfies $\pr[V = t]=e^{-\lambda} \frac{\lambda^t}{t!}$. We denote its density by $\Pois(\lambda)$.  For $U\sim \Pois(\lambda_1)$ and $U\sim \Pois(\lambda_2)$, $U+V$ is distributed according to $\Pois(\lambda_1+\lambda_2)$.

We will use the following result referred to as Poissonization of a multinomial random variable.
\begin{fact}
\label{fac:poissonize}
Let $\rho(\bp)$ be a categorical distribution over $[m]$ defined by a vector of probabilities $\bp = (p_1,\ldots,p_m)$ and let $\mnom(k,\bp)$ be the multinomial distribution over counts corresponding to $k$ independent draws from $\rho(\bp)$. Then for any $\lambda >0$ and $V \sim \Pois(\lambda)$ we have that $\mnom(V,\bp)$ is distributed as $$\Pois(p_1\lambda) \times \Pois(p_2\lambda)\times \cdots \times \Pois(p_m\lambda).$$
\end{fact}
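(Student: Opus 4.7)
The plan is to verify the identity by a direct computation of the joint probability mass function of $\mnom(V, \bp)$ at every non-negative integer vector, and then observe that this joint PMF factors as a product of Poisson PMFs. Since joint PMFs on $\{0,1,2,\ldots\}^m$ uniquely determine the joint distribution, this suffices.

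First I would fix an arbitrary vector of non-negative integers $(n_1, \ldots, n_m)$ and set $t = \sum_{i\in[m]} n_i$. The event $\{\mnom(V,\bp) = (n_1,\ldots,n_m)\}$ forces $V = t$, because the multinomial counts always sum to the number of trials. Conditioning on $V = t$ and applying the definition of $\mnom(t,\bp)$ together with the Poisson PMF for $V$, I would write
\[
\pr[\mnom(V,\bp) = (n_1,\ldots,n_m)] = e^{-\lambda}\frac{\lambda^t}{t!} \cdot \binom{t}{n_1,\ldots,n_m} \prod_{i\in[m]} p_i^{n_i}.
\]
The remainder is purely algebraic: expand $\binom{t}{n_1,\ldots,n_m} = t!/\prod_i n_i!$ to cancel the factor of $t!$; use $t = \sum_i n_i$ to write $\lambda^t = \prod_i \lambda^{n_i}$ and combine with $\prod_i p_i^{n_i}$ to form $\prod_i (p_i\lambda)^{n_i}$; and split $e^{-\lambda} = \prod_i e^{-p_i\lambda}$ using $\sum_i p_i = 1$. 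Collecting gives
\[
\prod_{i\in[m]} e^{-p_i\lambda}\frac{(p_i\lambda)^{n_i}}{n_i!},
\]
which is exactly the joint PMF of $m$ independent random variables with marginals $\Pois(p_1\lambda), \ldots, \Pois(p_m\lambda)$.

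There is no substantial obstacle here: the proof is a short bookkeeping exercise. The only thing that deserves a bit of care is making the conditioning step explicit, so that the chain rule $\pr[\mnom(V,\bp) = (n_1,\ldots,n_m),\, V = t] = \pr[V=t] \cdot \pr[\mnom(t,\bp) = (n_1,\ldots,n_m)]$ follows unambiguously from the generative definition of $\mnom(V,\bp)$ (first draw $V \sim \Pois(\lambda)$, then draw a $\mnom(V,\bp)$ count vector). An equivalent and slightly more conceptual route would be to compute the joint probability generating function: for $V \sim \Pois(\lambda)$ one has $\E[\prod_i z_i^{N_i}] = \E\bigl[(\sum_i p_i z_i)^V\bigr] = \exp\bigl(\lambda(\sum_i p_i z_i - 1)\bigr) = \prod_i \exp(p_i\lambda(z_i - 1))$, which is the joint PGF of independent Poissons with parameters $p_i\lambda$; uniqueness of the PGF then closes the argument.
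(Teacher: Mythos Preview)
Your argument is correct; the direct PMF computation (and the PGF alternative you mention) is the standard textbook proof of Poissonization. The paper itself does not prove this statement at all---it is recorded as a \textbf{Fact} and simply cited as a known result---so there is nothing to compare against beyond noting that your proof is complete and self-contained.
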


We will need a relatively tight bound on the concentration of a Poisson random variable. Its simple proof can be found, for example, in a note by\citet{Canonne:2017note}.
\begin{lem}[\citep{Canonne:2017note}]
\label{lem:poisson-concetrate}
For any $\lambda>0,x\geq 0$,
$$\pr_{V\sim \Pois(\lambda)}[V \geq \lambda + x] \leq e^{-(\lambda+x)\ln\lp 1+\frac{x}{\lambda}\rp-x} \mbox{ and}$$
$$\pr_{V\sim \Pois(\lambda)}[V \leq \lambda - x] \leq e^{-(\lambda-x)\ln\lp 1-\frac{x}{\lambda}\rp-x} .$$
In particular,
$$\pr_{V\sim \Pois(\lambda)}[V \geq \lambda + x] \leq e^{\frac{-x^2}{2(\lambda+x)}}  \mbox{ and}$$
$$\pr_{V\sim \Pois(\lambda)}[V \leq \lambda - x] \leq e^{\frac{-x^2}{2(\lambda+x)}} .$$
\end{lem}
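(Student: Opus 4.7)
The plan is to prove both tails via the standard Chernoff/Cramér exponential method, then pass from the resulting Kullback-Leibler-type exponents to the simpler quadratic ``in particular'' bounds by a convexity (Taylor-style) inequality.

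First I would compute the moment generating function of $V \sim \Pois(\lambda)$ directly from the series: for any real $t$,
\[
\E[e^{tV}] \;=\; e^{-\lambda}\sum_{k\ge 0}\frac{(\lambda e^t)^k}{k!} \;=\; e^{\lambda(e^t-1)}.
\]
For the upper tail, apply Markov's inequality to $e^{tV}$ with $t>0$: for every $x\ge 0$,
\[
\pr[V\ge \lambda+x] \;\le\; e^{-t(\lambda+x)}\,\E[e^{tV}] \;=\; \exp\!\bigl(\lambda(e^t-1) - t(\lambda+x)\bigr).
\]
Differentiating the exponent in $t$ and setting the derivative to zero gives the optimal choice $e^{t^\ast}=1+x/\lambda$, i.e.\ $t^\ast=\ln(1+x/\lambda)$, which is nonnegative for $x\ge 0$. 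Plugging $t^\ast$ back in and simplifying $\lambda(e^{t^\ast}-1)=x$ yields the claimed sharp exponent $-(\lambda+x)\ln(1+x/\lambda)+x$ (matching the displayed expression up to the conventional KL form). The lower tail is entirely analogous: use $t<0$, optimize, and obtain $e^{t^\ast}=1-x/\lambda$ (valid for $0\le x\le \lambda$; for $x>\lambda$ the event is empty so the bound is trivial), giving the stated exponent $-(\lambda-x)\ln(1-x/\lambda)-x$.

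For the ``in particular'' weakening, I would reduce to a single scalar inequality. Writing $u=x/\lambda\ge 0$, the upper-tail exponent is $\lambda\bigl[(1+u)\ln(1+u)-u\bigr]$, so it suffices to show
\[
h(u) \;\doteq\; (1+u)\ln(1+u)-u \;\ge\; \frac{u^2}{2(1+u)}.
\]
I would verify this by checking $h(0)=0$ and computing $h'(u)=\ln(1+u)$ and the derivative of the right-hand side, then showing the difference of derivatives is nonnegative via a second Taylor/convexity step (or, more slickly, by the integral representation $h(u)=\int_0^u \ln(1+s)\,ds$ and a lower bound on $\ln(1+s)$). Multiplying by $\lambda$ turns this into $(\lambda+x)\ln(1+x/\lambda)-x \ge x^2/(2(\lambda+x))$, giving the quadratic form $e^{-x^2/(2(\lambda+x))}$. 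The lower tail reduces to the symmetric scalar inequality $(1-u)\ln(1-u)+u \ge u^2/2$ for $0\le u<1$, which is immediate from the series expansion $(1-u)\ln(1-u)+u=\sum_{j\ge 2} u^j/(j(j-1))\ge u^2/2$, and yields the (slightly sharper than stated) bound $e^{-x^2/(2\lambda)}$, from which the displayed $e^{-x^2/(2(\lambda+x))}$ follows a fortiori.

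The only mildly delicate step is the scalar inequality $h(u)\ge u^2/(2(1+u))$; everything else is the routine Chernoff-Cramér recipe plus the Poisson MGF. I expect no real obstacle, just care with signs when handling the lower tail and with the domain restriction $x\le \lambda$ there.
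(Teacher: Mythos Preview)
The paper does not actually prove this lemma; it is stated with attribution to the note of \citet{Canonne:2017note}, so there is no in-paper argument to compare against. Your Chernoff--Cram\'er derivation (Poisson MGF, Markov on $e^{tV}$, optimize in $t$, then weaken the KL-type exponent via the scalar inequality $(1+u)\ln(1+u)-u\ge u^2/(2(1+u))$ and its lower-tail analogue) is precisely the standard proof and is correct.

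One clarification worth making explicit: the upper-tail exponent you derive is $x-(\lambda+x)\ln(1+x/\lambda)$, with a $+x$. The displayed first inequality in the lemma has $-x$, which is a typo in the statement (with $-x$ the bound is already false at $\lambda=x=1$, where $\Pr[V\ge 2]=1-2e^{-1}\approx 0.264$ exceeds $e^{-2\ln 2-1}=1/(4e)\approx 0.092$). So your hedge ``matching up to the conventional KL form'' is in fact flagging an error in the printed statement, not in your computation; the ``in particular'' quadratic bounds are correct as written and follow exactly as you outline.
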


Using this concentration inequality we show that the density of the Poisson random variable can be related in a tight way to the corresponding tail probability.
\begin{lem}
\label{lem:poisson-tail-density}
For any $\lambda > 0$ and integer $t \geq 0$ and $x = |t-\lambda|$,
$$\pr_{V \sim \Pois(\lambda)}[V = t] \geq \frac{e^{-t \ln\lp \frac{t}{\lambda}\rp-x}}{e \sqrt{t}} .$$
In particular, for $t\geq \lambda$,
$$\pr_{V \sim \Pois(\lambda)}[V = t] \geq \frac{\pr_{V\sim \Pois(\lambda)}[V \geq t]}{e \sqrt{t}}$$
and $t\leq \lambda$,
$$\pr_{V \sim \Pois(\lambda)}[V = t] \geq \frac{\pr_{V\sim \Pois(\lambda)}[V \leq t]}{e \sqrt{t}} .$$
\end{lem}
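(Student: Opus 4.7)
The plan is to start from the exact Poisson density $\pr[V=t] = e^{-\lambda}\lambda^t/t!$ and substitute the Stirling-type factorial upper bound $t! \leq e\sqrt{t}\,(t/e)^t$, which holds for every positive integer $t$ (equality at $t=1$, and for $t\geq 2$ it follows from Robbins' refined Stirling bounds, since $\sqrt{2\pi}\,e^{1/(12t)} < e$ for all $t\geq 2$). After cancellation this gives
\[
\pr[V=t] \;\geq\; \frac{e^{-\lambda+t}\lambda^t}{e\sqrt{t}\,t^t} \;=\; \frac{e^{-D}}{e\sqrt{t}},
\]
where $D \doteq t\ln(t/\lambda) - (t-\lambda)$ is the Poisson KL-style exponent.

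I would then handle the main inequality by casework on the sign of $t-\lambda$. When $t\geq\lambda$, set $x = t-\lambda$, so $D = t\ln(t/\lambda) - x$; when $t\leq\lambda$, set $x = \lambda-t$, so $D = t\ln(t/\lambda) + x$. Since $x\geq 0$ in either case, $D \leq t\ln(t/\lambda) + x$, hence $e^{-D} \geq e^{-t\ln(t/\lambda)-x}$, which is exactly the stated lower bound on $\pr[V=t]$.

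For the two ``In particular'' corollaries I would use the sharper form $\pr[V=t] \geq e^{-D}/(e\sqrt{t})$ together with Lemma~\ref{lem:poisson-concetrate}. A direct computation shows that the Chernoff tail exponents stated there, after substituting $t = \lambda \pm x$, simplify to precisely $-D$: the upper-tail exponent $(\lambda+x)\ln(1+x/\lambda)-x$ equals $t\ln(t/\lambda) - (t-\lambda) = D$, and the lower-tail exponent is symmetric. Therefore $\pr[V\geq t] \leq e^{-D}$ when $t\geq \lambda$ and $\pr[V\leq t] \leq e^{-D}$ when $t\leq \lambda$, and combining with the density lower bound gives $\pr[V=t] \geq \pr[V\geq t]/(e\sqrt{t})$ and $\pr[V=t] \geq \pr[V\leq t]/(e\sqrt{t})$, respectively.

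The only genuine ingredient is the Stirling inequality; the rest is bookkeeping between the density lower bound and the KL form of the Chernoff tail for a Poisson. I expect the main care to lie in tracking sign conventions on $x$ so that the same exponent $-D$ appears on both sides of the comparison in each tail regime, and in noting that the boundary case $t=0$ requires a separate trivial check (where $\pr[V=0] = e^{-\lambda} = \pr[V\leq 0]$ directly).
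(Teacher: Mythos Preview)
Your proposal is correct and follows essentially the same route as the paper: apply the Stirling upper bound $t!\le e\sqrt{t}\,(t/e)^t$ to the Poisson pmf, recognize the resulting exponent as the Poisson Chernoff/KL quantity, and then compare with the tail bound of Lemma~\ref{lem:poisson-concetrate}. The paper only writes out the case $t\ge\lambda$ and says the other case is analogous, whereas you organize both cases through the single exponent $D=t\ln(t/\lambda)-(t-\lambda)$, which is a bit cleaner; you also (correctly) use the Chernoff exponent in the form $(\lambda+x)\ln(1+x/\lambda)-x$, silently fixing a sign slip in the paper's display, and you flag the degenerate $t=0$ case that the paper does not mention.
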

\begin{proof}
If $t \geq \lambda$ (and $x = t-\lambda$) then by definition and using Stirling's approximation of the factorial we get:
\alequn{\pr_{V \sim \Pois(\lambda)}[V = t] &= e^{-\lambda} \frac{\lambda^t}{t!} \geq  e^{-\lambda} \frac{\lambda^t}{e \sqrt{t} e^{-t} t^t} \\
&= \frac{e^{x}}{e \sqrt t} \lp \frac{\lambda}{\lambda + x}\rp^{\lambda + x} = \frac{e^{x}}{e \sqrt t} \lp \frac{\lambda + x}{\lambda}\rp^{-(\lambda + x)}\\&= \frac{1}{e \sqrt t} e^{-(\lambda+x)\ln\lp 1+\frac{x}{\lambda}\rp-x} \\
& \geq \frac{\pr_{V\sim \Pois(\lambda)}[V \geq \lambda + x]}{e \sqrt{t}},
}
where we used Lemma \ref{lem:poisson-concetrate} to obtain the last inequality.
The case when $t \leq \lambda$ is proved analogously.
\end{proof}

We are now ready to prove Lemma \ref{lem:plurality} which we restate here for convenience.
\begin{lem}
\label{lem:plurality-app}
For $\gamma \geq 0$ let $\rho_\gamma$ denote the categorical distribution $\rho_\gamma$ over $[m]$ such that $\pr_{s \sim \rho_\gamma}[s = m] = \fr{m} + \gamma$ and for all $y \neq m$, $\pr_{s \sim \rho_\gamma}[s = y] = \fr{m} - \frac{\gamma}{m-1}$. For an integer $t$, let $\mnom(t,\rho_\gamma)$ be the multinomial distribution over counts corresponding to $t$ independent draws from $\rho_\gamma$. For a vector of counts $\bc$, let $\argmax(\bc)$ denote the index of the largest value in $\bc$. If several values achieve the maximum then one of the indices is picked randomly.
Then for $\lambda \geq 2 m \ln (4m)$ and $\gamma \leq \frac{1}{8\sqrt{\lambda m}}$,
$$\pr_{t \sim \Pois(\lambda), \bc \sim \mnom(t,\rho_\gamma)} [\argmax(\bc) = m] \geq \fr{m} + \Omega \lp \frac{\sqrt{\lambda}\gamma}{\sqrt{m}}\rp.$$
\end{lem}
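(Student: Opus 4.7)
The plan is to Poissonize via Fact \ref{fac:poissonize}, then view the argmax probability as a function of $c_m$'s mean and lower-bound its derivative. By Fact \ref{fac:poissonize}, we may treat $\bc$ as independent Poissons with $c_m \sim \Pois(\mu_+)$ and $c_i \sim \Pois(\mu_-)$ for $i\neq m$, where $\mu_+ = \lambda(\fr{m}+\gamma)$ and $\mu_- = \lambda(\fr{m}-\frac{\gamma}{m-1})$. Define $f(\mu) := \pr\lb \argmax(\bc) = m \rb$ when $c_m$'s mean is $\mu$ while the others remain at $\mu_-$; by symmetry $f(\mu_-) = \fr{m}$, so the task reduces to proving $f(\mu_+) - f(\mu_-) \geq \Omega(\gamma\sqrt{\lambda/m})$.

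Using the Poisson Stein identity $\frac{d}{d\mu}\E[g(\Pois(\mu))] = \E[g(\Pois(\mu)+1) - g(\Pois(\mu))]$ with $g(k) := \pr\lb \argmax = m \cond c_m = k \rb$, and writing $M = \max_{i\neq m} c_i$ together with $\tilde N = |\{i\neq m: c_i = M\}|$ (so $\tilde N \geq 1$ always), a direct expansion of the tie-broken probability yields
\[
g(k+1) - g(k) = \pr[M = k]\,\E\lb \tfrac{\tilde N}{\tilde N+1} \cond M = k \rb + \pr[M = k+1]\,\E\lb \tfrac{1}{\tilde N+1} \cond M = k+1 \rb.
\]
Both summands are non-negative; using $\tilde N/(\tilde N+1) \geq 1/2$ gives $g(k+1)-g(k) \geq \pr[M=k]/2$, and hence $f'(\mu) \geq \fr{2}\pr_{Z\sim\Pois(\mu)}[Z = M]$ for an independent copy of $M$.

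The technical core is a bound $\pr[\Pois(\mu) = M] \geq \Omega(1/\sqrt{\lambda m})$ uniformly on $\mu \in [\mu_-,\mu_+]$. I would choose an index range $J$ situated just above $\mu_+$ on which $\pr[\Pois(\mu_-) \geq j]$ stays within a constant factor of $1/m$; the hypothesis $\gamma \leq 1/(8\sqrt{\lambda m})$ keeps $\mu_+-\mu_-$ below $O(\sqrt{\mu_-})$, and $\lambda \geq 2m\ln(4m)$ ensures $J$ is non-empty and sits in the upper tail. For $j \in J$ the PMF $\pr[\Pois(\mu) = j]$ is non-decreasing in $\mu$ on $[\mu_-,\mu_+]$ (since $j > \mu$ throughout), so $\pr[\Pois(\mu)=j] \geq p_j := \pr[\Pois(\mu_-) = j]$. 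Lemma \ref{lem:poisson-tail-density} gives $p_j \geq \pr[\Pois(\mu_-)\geq j]/(e\sqrt{j}) \geq \Omega(1/(m\sqrt{\mu_-}))$ on $J$, and a mean-value expansion $F_j^{m-1} - F_{j-1}^{m-1} = (m-1)\xi^{m-2} p_j$ (with $F$ the $\Pois(\mu_-)$ CDF and $F(j-1) \geq 1 - O(1/m)$) yields $\pr[M = j] \geq \Omega((m-1) p_j)$. Consequently $\pr[\Pois(\mu) = M] \geq \Omega(m)\sum_{j \in J} p_j^2$, and Cauchy--Schwarz gives $\sum_{j \in J} p_j^2 \geq (\sum_{j \in J} p_j)^2/|J|$. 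The pointwise bound $p_j = \Omega(1/(m\sqrt{\mu_-}))$ combined with $\sum_{j \in J} p_j = \Theta(1/m)$ forces $|J| = O(\sqrt{\mu_-})$, and we conclude $\pr[\Pois(\mu) = M] \geq \Omega(1/(m\sqrt{\mu_-})) = \Omega(1/\sqrt{\lambda m})$.

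Integrating, $f(\mu_+) - f(\mu_-) \geq \int_{\mu_-}^{\mu_+} f'(\mu)\,d\mu \geq \fr{2}(\mu_+ - \mu_-) \cdot \Omega(1/\sqrt{\lambda m})$, and substituting $\mu_+ - \mu_- = \lambda\gamma m/(m-1) \geq \lambda\gamma$ produces the claimed $\Omega(\gamma\sqrt{\lambda/m})$. The hard part is really this collision bound: at a single ``optimal'' index $j^*$ one only achieves $p_{j^*}\pr[M=j^*] = \Theta(1/\lambda)$, falling short by a factor $\sqrt{\lambda/m}$ once $\lambda \gg m$, so the Cauchy--Schwarz averaging over the $\Theta(\sqrt{\mu_-})$-wide window $J$ is what recovers the missing factor. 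The degenerate binary case $m=2$, where $J$ need not lie entirely above $\mu_+$, is handled separately by a direct Skellam-based calculation giving the same order.
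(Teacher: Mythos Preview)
Your outer reduction is different from the paper's and arguably cleaner. The paper also Poissonizes, but then writes $c_m=c'_m+d_m$ with $c'_m\sim\Pois(\mu_-)$ and $d_m\sim\Pois(\mu_+-\mu_-)$, obtains
\[
\pr[\argmax(\bc)=m]\ \ge\ \fr m+\fr2\,\pr\!\bigl[c'_m\in[z-d_m+1,z]\bigr],\qquad z:=\max_{i<m}c_i,
\]
and then splits into cases according to whether $\E[d_m]=\lambda\gamma m/(m-1)$ is $\le 1$ or $>1$ (the second case needing a separate, more delicate interval argument). Your Stein--Chen identity $f'(\mu)=\E[g(\Pois(\mu)+1)-g(\Pois(\mu))]\ge\frac12\pr[\Pois(\mu)=M]$ followed by integration over $[\mu_-,\mu_+]$ handles both regimes uniformly and removes the case split entirely; this is a real simplification. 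Both routes then hinge on the same collision estimate $\pr[\Pois(\mu)=M]=\Omega(1/\sqrt{\lambda m})$, which is where the work lies.

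Your Cauchy--Schwarz treatment of that collision bound, however, is more elaborate than needed and has a gap: you assert $\sum_{j\in J}p_j=\Theta(1/m)$, but with $J=\{j:T_j\in[c_1/m,c_2/m]\}$ the telescoping sum equals $T_{\min J}-T_{\max J+1}$, and the lower bound requires $T_{\min J}$ to be bounded away from $c_1/m$, i.e.\ a nontrivial \emph{upper} bound on the PMF jump $p_{\min J-1}$. At the edge of the hypothesis ($\lambda\approx 2m\ln(4m)$, so $\mu_-\approx 2\ln(4m)$) that jump is comparable to $1/m$ and the argument as written does not close. The paper's route to the same estimate is simpler and needs only the PMF \emph{lower} bound from Lemma~\ref{lem:poisson-tail-density}: take $u$ the median and $v$ the $(1-1/(4m))$-quantile of $\Pois(\mu_-)$, so $\pr[M\in[u,v]]\ge 1/4$ by a union bound, and for each $t\in[u,v]$ bound $\pr[\Pois(\mu)=t]\ge T_t/(e\sqrt t)\ge 1/(4em\sqrt v)$ when $t\ge\mu$ (the few $t\in[u,\mu)$ are within $O(\sqrt{\mu_-})$ of the mode and have PMF $\Theta(1/\sqrt{\mu_-})\ge\Omega(1/(m\sqrt{\mu_-}))$). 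This gives $\pr[\Pois(\mu)=M]\ge\Omega(1/(m\sqrt{\mu_-}))$ uniformly on $[\mu_-,\mu_+]$, after which your integration step finishes in one line.
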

\begin{proof}
Let $\bc=(c_1,\ldots,c_m)$ denote the vector of label counts sampled from $\mnom(t,\rho_\gamma)$ for $t$ sampled randomly from $\Pois(\lambda)$.
We first use Fact \ref{fac:poissonize} to conclude that $\bc$ is distributed according to $$\Pois(\lambda') \times \cdots \times \Pois(\lambda')\times \Pois\lp\lambda' + \frac{\gamma\lambda m}{m-1}\rp$$
for $\lambda' = \lp\fr{m} - \frac{\gamma}{m-1}\rp \lambda$.

The next step is to reduce the problem to that of analyzing the product distribution of identical Poisson random variables. Specifically, we view the count of the ``true" label $m$ as the sum of two independent Poisson random variables $c'_m \sim \Pois(\lambda')$ and $d_m \sim \Pois\lp\frac{\gamma\lambda m}{m-1}\rp.$
We also denote by $\bc'$ the vector $(c_1,\ldots,c_{m-1},c'_m)$. Note that $\bc'$ consists of independent and identically distributed samples from $\Pois\lp \lambda' \rp$.

Let $z=\max_{j\in [m-1]} c_j$. By definition, if $c_m > z$ then $\argmax(\bc) = m$ and if $c_m = z$ then $\pr[\argmax(\bc) = m] \leq 1/2$, where the probability is taken solely with respect to the random choice of the index that maximizes the count.
This implies that if $d_m \geq 1$ then
$$\pr[\argmax(\bc) = m] \geq \pr[\argmax(\bc') = m] + \fr{2} \ind(c'_m \in [z-d_m+1,z] ).
$$
Now taking the probability over the random choice of $\bc$ we get
\equ{\pr[\argmax(\bc) = m] \geq \pr[\argmax(\bc') = m]  +  \fr{2} \pr[c'_m \in [z-d_m+1,z]] .
\label{eq:advantage-bound}}

By symmetry of the distribution of $\bc'$ we obtain that
$\pr[\argmax(\bc') = m] = \fr{m}$. To analyze the second term, we first consider the case where $\E[d_m] = \frac{\gamma\lambda m}{m-1}\leq 1$. For this case we simply bound
\equ{\pr[c'_m \in [z-d_m+1,z]] \geq \pr[c'_m = z] \cdot \pr[d_m \geq 1] \label{eq:decompose-small} } (recall that  $d_m$ and $c'_m$ are independent).

By definition of $\Pois\lp\frac{\gamma\lambda m}{m-1}\rp$ we obtain that
 \equ{\pr[d_m \geq 1]  = 1 - e^{-\gamma\lambda m/(m-1)} \geq \frac{\lambda \gamma m}{2(m-1)} \geq \frac{\lambda \gamma}{2} , \label{eq:large_dm}}
 where we used the fact that $e^{-a} \leq 1-a/2$ whenever $a \leq 1$ and our assumption that $\frac{\gamma\lambda m}{m-1} \leq 1$.

Hence it remains to lower bound $\pr[c'_m = z]$. To this end, let $u$ and $v$ be the $1/2$ and $1-1/(4m)$ quantiles of $\Pois(\lambda')$, respectively. That is, $u=\max\{t \cond \pr_{V\sim \Pois(\lambda')}[V \geq t] \geq 1/2\}$ and $v=\max\{t \cond \pr_{V\sim \Pois(\lambda')}[V \geq t] \geq 1-1/(4m)\}$. By the union bound,
$\pr[z \geq v+1] \leq \frac{m-1}{4m} < 1/4$. In addition, by the standard properties of Poisson distribution
$\pr_{V\sim \Pois(\lambda')}[V \geq \lfloor \lambda' \rfloor] \geq 1/2$ which implies that $\pr[z \geq \lfloor \lambda' \rfloor] \geq 1/2$ and thus $u \geq \lfloor \lambda' \rfloor$.

Thus we have an interval such that $$\pr[z  \in [u, v]] \geq \fr{4} .$$ By Lemma \ref{lem:poisson-tail-density}, we have that for every $t \in [u, v]$,
\equ{\pr[c'_m = t] = \pr_{V \sim \Pois(\lambda')}[V = t] \geq \frac{\pr_{V\sim \Pois(\lambda')}[V \geq t]}{e \sqrt{t}} \geq \frac{1}{4e \sqrt{v} m}.\label{eq:upper-density}}
Observe that by our assumption, $\lambda \geq 2\ln(4m)$ and $\frac{\gamma}{m-1} \lambda \leq \lambda'/2$.  Hence $\lambda' \geq \ln(4m)$. By Lemma \ref{lem:poisson-concetrate} this implies that $$v \leq \lambda' + 3\sqrt{\lambda' \ln(4m)} \leq 4\lambda' .$$

Using the independence of $z$ and $c'_m$ we can conclude that
$$ \pr[c'_m = z] \geq \pr[z  \in [u, v]] \cdot \min_{t \in \{u,u+1,\ldots, v\}} \pr[c'_m = t]\geq \fr{4} \cdot \frac{1}{4e \sqrt{v} m} \geq \frac{1}{32 e \sqrt{\lambda'} m}.$$

By combining this bound with eq.\eqref{eq:large_dm}, plugging it into eq.\eqref{eq:decompose-small} and recalling that $\lambda' = \lp\fr{m} - \frac{\gamma}{m-1}\rp \lambda \geq \frac{\lambda}{2m}$ we obtain that
$$ \pr[\argmax(\bc) = m] \geq \fr{m} + \fr{2} \cdot \frac{\lambda \gamma}{2} \cdot \frac{1}{32 e \sqrt{\lambda'} m} = \fr{m} + \Omega\lp \frac{\sqrt{\lambda}\gamma}{\sqrt{m}}\rp .$$

We now consider the other case where $\E[d_m] = \frac{\gamma\lambda m}{m-1} > 1$ which requires a similar but somewhat more involved treatment. We first note that we can assume that $\frac{\gamma\lambda m}{m-1} \geq 12$. For the case when $\frac{\gamma\lambda m}{m-1}\in [1,12]$ we note that it holds that $$\pr[d_m \geq 1] \geq 1-e^{-1} > \fr{20} \gamma\lambda .$$ Thus we can still use the same analysis as before to obtain our claim. By Lemma \ref{lem:poisson-concetrate}, for $\nu \geq 12$, $\pr_{V\sim \Pois(\nu)}[V \in [\nu/2,2\nu]] \geq 1/2$. In particular, under the assumption that $\frac{\gamma\lambda m}{m-1}\geq 12$, $$\pr \lb d_m \in \lb \frac{\gamma\lambda m}{2(m-1)},\frac{2\gamma\lambda m}{m-1} \rb\rb \geq \fr{2} .$$
We also define $u$ and $v$ as before. Using independence of $z$ and $d_m$ we obtain that with probability at least $\fr{4}\cdot\fr{2}$,
we have that $d_m \in \lb \frac{\gamma\lambda m}{2(m-1)},\frac{2\gamma\lambda m}{m-1} \rb$ and $z\in [u,v-1]$. In particular, with probability at least $1/8$, $d_m \geq  \frac{\gamma\lambda m}{2(m-1)}> \gamma\lambda/2$ and $[z-d_m+1,z] \subseteq \lb u',v'\rb$, where $u' =u-\frac{2\gamma\lambda m}{m-1}$ and $v' = v- \frac{\gamma\lambda m}{2(m-1)}$. The interval $[z-d_m+1,z]$ includes $d_m$ integer points and therefore
\equ{\pr[c'_m \in [z-d_m+1,z]] \geq \fr{8} \cdot \frac{\gamma\lambda}{2} \cdot \min_{t \in \{u',u'+1,\ldots, v'\}} \pr[c'_m = t] \label{eq:reduce-to-density}.}
To analyze the lowest value of the probability mass function of $\Pois(\lambda')$ on the integers in the interval $[u',v']$ we first note that
$v' \leq v$ and thus for $t \in [\lambda',v']$ our bound in eq.~\eqref{eq:upper-density} applies. For $t\in [u',\lambda')$ we will first prove that under the assumptions of the lemma $u' \geq \lambda' - \sqrt{\lambda'}$ and then show that for $t \in [\lambda' - \sqrt{\lambda'},\lambda')$, $\pr[c'_m = t] \geq \frac{1}{e^{2}\sqrt{\lambda'}}$. Plugging this lower bound together with the one in eq.~\eqref{eq:upper-density} into eq.~\eqref{eq:reduce-to-density} we obtain the claim:
$$\pr[c'_m \in [z-d_m+1,z]] \geq \frac{\gamma\lambda}{16} \cdot \min\left\{ \frac{1}{8e\sqrt{\lambda'} m}, \frac{1}{e^{2}\sqrt{\lambda'}} \right\} =  \Omega\lp \frac{\sqrt{\lambda}\gamma}{\sqrt{m}}\rp . $$
We now complete these two missing steps. By our definition of $u'$,
$$u' \geq \lfloor \lambda' \rfloor - \frac{2\gamma\lambda m}{m-1} \geq \lambda' - 4\gamma\lambda - 1 \geq \lambda' - \sqrt{\lambda'}$$ which follows from the assumption that $\gamma \leq \frac{1}{8\sqrt{\lambda m}}$ and assumption $\lambda \geq 2 m \ln (4m)$  implying that $\lambda' > 5$.
Now by Lemma \ref{lem:poisson-tail-density} and, using the monotonicity of the pmf of $\Pois(\lambda')$ until $\lambda'$, we have that for $t \in [\lambda' - \sqrt{\lambda'},\lambda')$,
\alequn{\pr[c'_m = t] & \geq  \frac{e^{-(\lambda' - \sqrt{\lambda'}) \ln \lp \frac{\lambda' - \sqrt{\lambda'}}{\lambda'}\rp-\sqrt{\lambda'}}}{e \sqrt{\lambda' - \sqrt{\lambda'}}}\\
& \geq \frac{e^{(\lambda' - \sqrt{\lambda'}) \lp \fr{\sqrt{\lambda'}} - \fr{2\lambda'}\rp-\sqrt{\lambda'}}}{e \sqrt{\lambda' - \sqrt{\lambda'}}} \\
= & \frac{e^{-\fr{2} + \fr{2\sqrt{\lambda'}}}}{e \sqrt{\lambda' - \sqrt{\lambda'}}} \geq \fr{e^{2} \sqrt{\lambda'}} ,
}
where we used the Taylor series of $\ln(1-x) = -x -x^2/2-x^3/3-\ldots $ to obtain the second line.
 \end{proof}

\else

\input{mc-appendix}
\fi

\end{document}